\newcommand{\euler}{e}
\newcommand{\Prob}{\mathbb{P}}
\newcommand{\E}{\mathbb{E}}
\newcommand{\Opt}{\textsc{OPT}\xspace}
\DeclareMathOperator*{\argmax}{arg\,max}
\newcommand{\Rep}{\textsc{Rep}\xspace}
\newcommand{\RepG}{\textsc{Rep-Greedy}\xspace}
\newcommand{\Tau}{\Gamma}
\newcommand{\AlgStreamOpt}{{\textsc{Replacement-Streaming-Know-Opt}}\xspace}
\newcommand{\AlgStream}{{\textsc{Replacement-Streaming}}\xspace}
\newcommand{\AlgStreamModified}{{\textsc{Replacement-Pseudo-Streaming}}\xspace}
\newcommand{\AlgRG}{{\textsc{Replacement-Greedy}}\xspace}
\newcommand{\AlgDistributed}{{\textsc{Replacement-Distributed}}\xspace}
\newcommand{\AlgDistributedFast}{{\textsc{Distributed-Fast}}\xspace}
\newcommand{\AlgExchange}{{\textsc{Exchange}}\xspace}
\newcommand{\AlgLocal}{{\textsc{LocalSearch}}\xspace}
\newtheorem{theorem}{Theorem}
\newtheorem{lemma}{Lemma}
\newtheorem{cor}{Corollary}
\let\emptyset\varnothing
\icmltitlerunning{Data Summarization at Scale: A Two-Stage Submodular Approach}
\begin{document}

\twocolumn[

\vspace{-10pt}
\icmltitle{Data Summarization at Scale:\\A Two-Stage Submodular Approach}

\icmlsetsymbol{equal}{*}

\begin{icmlauthorlist}
\icmlauthor{Marko Mitrovic}{equal,yale}
\icmlauthor{Ehsan Kazemi}{equal,yale}
\icmlauthor{Morteza Zadimoghaddam}{goo}
\icmlauthor{Amin Karbasi}{yale}

\end{icmlauthorlist}

\icmlaffiliation{goo}{Google Research, Zurich, Switzerland}
\icmlaffiliation{yale}{Department of Computer Science, Yale University, New Haven, Connecticut, USA}

\icmlcorrespondingauthor{Ehsan Kazemi}{ehsan.kazemi@yale.edu}

\icmlkeywords{Machine Learning, ICML}

\vskip 0.3in
]
\printAffiliationsAndNotice{\icmlEqualContribution} 

\begin{abstract}
The sheer scale of modern datasets has resulted in a dire need for summarization techniques that identify representative elements in a dataset.
Fortunately, the vast majority of data summarization tasks satisfy an intuitive diminishing returns condition known as submodularity, which allows us to find nearly-optimal solutions in linear time.
We focus on a two-stage submodular framework where the goal is to use some given training functions to reduce the ground set so that optimizing new functions (drawn from the same distribution) over the reduced set provides almost as much value as optimizing them over the entire ground set. In this paper, we develop the first streaming and distributed solutions to this problem. In addition to providing strong theoretical guarantees, we demonstrate both the utility and efficiency of our algorithms on real-world tasks including image summarization and ride-share optimization.

\end{abstract}

\section{Introduction}
In the context of machine learning, it is not uncommon to have to repeatedly optimize a set of functions that are fundamentally related to each other. In this paper, we focus on a class of functions called \textbf{submodular} functions. These functions exhibit a mathematical diminishing returns property that allows us to find nearly-optimal solutions in linear time. However, modern datasets are growing so large that even linear time solutions  can be computationally expensive. Ideally, we want to find a sublinear summary of the given dataset so that optimizing these related functions over this reduced subset is nearly as effective, but not nearly as expensive, as optimizing them over the full dataset.

As a concrete example, suppose Uber is trying to give their drivers suggested waiting locations across New York City based on historical rider pick-ups. Even if they discretize the potential waiting locations to just include points at which pick-ups have occurred in the past, there are still hundreds of thousands, if not millions, of locations to consider. If they wish to update these ideal waiting locations every day (or at any routine interval), it would be invaluable to be able to drastically reduce the number of locations that need to be evaluated, and still achieve nearly optimal results.

In this scenario, each day would have a different function that quantifies the value of a set of locations for that particular day. For example, in the winter months, spots near ice skating rinks would be highly valuable, while in the summer months, waterfront venues might be more prominent. On the other hand, major tourist destinations like Times Square will probably be busy year-round. 

In other words, although the most popular pick-up locations undoubtedly vary over time, there is also some underlying distribution of the user behavior that remains relatively constant and ties the various days together. This means that even though the functions for future days are technically unknown, if we can select a good reduced subset of candidate locations based on the functions derived from historical data, then this same reduced subset should perform well on future functions that we cannot explicitly see yet.

In more mathematical terms, consider some unknown distribution of functions $\mathbb{D}$ and a ground set $\Omega$ of $n$ elements to pick from. 
We want to select a subset $S$ of $\ell$ elements (with $\ell \ll n$) such that optimizing functions (drawn from distribution $\mathbb{D}$) over the reduced subset $S$ is comparable to optimizing them over the entire ground set $\Omega$.

This problem was first introduced by~\citet{balkanski2stage} as \textbf{two-stage submodular maximization}. This name comes from the idea that the overall framework can be viewed as two separate stages. First, we want to use the given functions to select a representative subset $S$, that is ideally sublinear in size of the entire ground set $\Omega$. In the second stage, for any functions drawn from this same distribution, we can optimize over $S$, which will be much faster than optimizing over $\Omega$. 

\textbf{Our Contributions.} In today's era of massive data, an algorithm is rarely practical if it is not scalable. In this paper, we build on existing work to provide solutions for two-stage submodular maximization in both the streaming and distributed settings. 
\cref{tab:results} summarizes the theoretical results of this paper and compares them with the previous state of the art.

\begin{table*}[t!]
	\centering
	\caption{Comparison of algorithms for two-stage monotone submodular maximization. Bounds that hold in expectation are marked (R). For distributed algorithms, we report the time complexity of each single machine, where $ \mathsf{M}$ represent the number of machines.} 
	\label{tab:results}
	\scalebox{0.7}{
		\begin{tabular}{lllll}
			\toprule
			\textbf{Algorithm} & \textbf{Approx.} & \textbf{Time Complexity}  & \textbf{Setup} &  \textbf{Function}\\
			\midrule
			\AlgLocal \citep{balkanski2stage} & $\nicefrac{1}{2}(1 - \nicefrac{1}{\euler})$ &  $O(km\ell n^2 \log n)$ & Centralized & Coverage functions only\\
			\AlgRG \citep{stan17} & $\nicefrac{1}{2}(1 - \nicefrac{1}{\euler^2})$ & $O(km\ell n)$ & Centralized & Submodular functions\\
			\bottomrule 
			\AlgStream (ours) & $\nicefrac{1}{7}$  & $O(k m n \log \ell )$  &  Streaming & Submodular functions\\
			\AlgDistributed (R) (ours) & $\nicefrac{1}{4}(1 - \nicefrac{1}{\euler^2})$ &  $O(\nicefrac{km\ell n}{\mathsf{M}} + \mathsf{M} km\ell^2)$  & Distributed & Submodular functions\\
			\AlgDistributedFast (R) (ours) & $0.107$  &  $O(\nicefrac{kmn \log \ell}{\mathsf{M}} + \mathsf{M} km\ell^2\log \ell )$ & Distributed & Submodular functions\\		
			\bottomrule
	\end{tabular}}
\end{table*}
\section{ Related Work} \label{section:related}
Data summarization is one of the most natural applications that falls under the umbrella of submodularity. As such, there are many existing works applying submodular theory to a variety of important summarization settings. For example, \citet{mirzasoleiman2013distributed} used an exemplar-based clustering approach to select representative images from the \textit{Tiny Images} dataset~\citep{torralba200880}. \citet{kirchhoff2014submodularity} and \citet{feldman2018do} also worked on submodular image summarization, while~\citet{lin2011class}  and \citet{wei2013using} focused on document summarization.

In addition to data summarization, submodularity appears in a wide variety of other machine learning applications including variable selection~\citep{krause05near}, recommender systems~\citep{GabillonKWEM2013}, crowd teaching~\citep{singla2014near}, neural network interpretability~\citep{elenberg17},  robust optimization~\citep{kazemi2017robust}, network monitoring~\citep{gomez10}, and influence maximization in social networks~\citep{kempe03}.

There have also been many successful efforts in scalable submodular optimization. For our distributed implementation we will primarily build on the framework developed by \citet{barbosa2015power}. Other similar algorithms include works by \citet{mirzasoleiman2013distributed} and \citet{mirrokni2015randomized}, as well as \citet{kumar13fast}. In terms of the streaming setting, there are two existing works we will focus on: \citet{badanidiyuru2014streaming} and  \citet{buchbinder2015online}. The key difference between the two
is that \citet{badanidiyuru2014streaming} relies on thresholding and will terminate as soon as $k$ elements are selected from the stream,  while \citet{buchbinder2015online} will continue through the end of the stream, swapping elements in and out when required.

Repeated optimization of related submodular functions has been a well-studied problem with works on structured prediction \citep{lin12mixtures}, submodular bandits \citep{yue11,chen17}, and online submodular optimization \citep{jegelka2011-online-submodular-min}. However, unlike our work, these approaches are not concerned with data summarization as a key pre-processing step.

The problem of two-stage submodular maximization was first introduced by \citet{balkanski2stage}. They present two algorithms with strong approximation guarantees, but both runtimes are prohibitively expensive. Recently, ~\citet{stan17} presented a new algorithm known as \AlgRG that improved the approximation guarantee from $\frac{1}{2}(1 - \frac{1}{\euler})$ to $\frac{1}{2}(1 - \frac{1}{\euler^2})$ and the run time from $O(km\ell n^2\log(n))$ to $O(km\ell n)$. 
They also show that, under mild conditions over the functions, maximizing over the sublinear summary can be arbitrarily close to maximizing over the entire ground set. In a nutshell, their method indirectly constructs the summary $S$ by greedily building up solutions $T_i$ for each given function $f_i$ simultaneously over $\ell$ rounds. 

Although \citet{balkanski2stage} and  \citet{stan17} presented centralized algorithms with constant factor approximation guarantees, there is a dire need for scalable solutions in order for the algorithm to be practically useful. 
 In particular, the primary purpose of two-stage submodular maximization is to tackle problems where the dataset is too large to be repeatedly optimized by simple greedy-based approaches. As a result, in many cases, the datasets can be so large that existing algorithms cannot even be run once. The greedy approach requires that the entire data must fit into main memory, which may not be possible, thus requiring a streaming-based solution. Furthermore, even if we have enough memory, the problem may simply be so large that it requires a distributed approach in order to run in any reasonable amount of time. 

\section{Problem Definition} \label{section:problem}
In general, if we want to optimally choose $\ell$ out of $n$ items, we need to consider every single one of the exponentially many possibilities. This makes the problem intractable for any reasonable number of elements, let alone the billions of elements that are common in modern datasets. Fortunately, many data summarization formulations satisfy an intuitive diminishing returns property known as  \textbf{submodularity}. 

More formally, a set function $f : 2^{V} \to \mathbb{R}$ is \textbf{submodular} ~\citep{fujishige91,krause12survey} if, for all sets $A \subseteq B \subseteq V$ and every element $v \in V \setminus B$, we have $f(A + v) - f(A) \geq f(B + v) - f(B)$.\footnote{For notational convenience, we use $A +v = A \cup \{v\}$.} That is, the marginal contribution of any element $v$ to the value of $f(A)$ diminishes as the set $A$ grows. 

Moreover, a submodular function $f$ is said to be \textbf{monotone} if $f(A) \leq f(B)$ for all sets $A \subseteq B \subseteq V$. That is, adding elements to a set cannot decrease its value. Thanks to a celebrated result by ~\citet{nemhauser78}, we know that if our function $f$ is monotone submodular, then the classical greedy algorithm will obtain a $(1 - 1/e)$-approximation to the optimal value. Therefore, we can nearly-optimize monotone submodular functions in linear time.

Now we formally re-state the problem we are going to solve.

\noindent
\textbf{Problem Statement.}  Consider some unknown distribution $\mathbb{D}$ of monotone submodular functions and a ground set $\Omega$ of $n$ elements to choose from. We want to select a set $S$ of at most $\ell$ items that maximizes the following function:
\begin{align}\label{eq:problem-dist}
G(S) = \E_{f \sim \mathbb{D}}[\max_{T \subseteq S, |T| \leq k} f(T)].
\end{align}
That is, the set $S$ we choose should be optimal in expectation over all functions in this distribution $\mathbb{D}$. However, in general, the distribution $\mathbb{D}$ is unknown and we only have access to a small set of functions $F = (f_1, \hdots, f_m)$  drawn from $\mathbb{D}$. Therefore, the best approximation we have is to optimize the following related function:
\begin{align} \label{eq:problem-m}
G_m(S) = \frac{1}{m} \sum_{i=1}^{m} \max_{T^*_i \subseteq S, |T^*_i| \leq k} f_i(T^*_i).
\end{align}
\citet[Theorem 1]{stan17} shows that with enough sample functions, $G_m(S)$ becomes an 
arbitrarily good approximation to $G(S)$.

To be clear, each $T^*_i \subset S$ is the corresponding size $k$  optimal solution for $f_i$. 
However, in general we cannot find the true optimal $T^*_i$, so throughout the paper we will use $T_i$ to denote the approximately-optimal size $k$ solution we select for each $f_i$. Table \ref{terms} (\cref{section:terms}) summarizes the important terminology and can be used as a reference, if needed.

It is very important to note that although each function $f_i$ is monotone submodular, $G(S)$ is \textbf{not} submodular \citep{balkanski2stage}, and thus using the regular greedy algorithm to directly build up $S$ will give no theoretical guarantees. We also note that although $G(S)$ is an instance of an XOS function \citep{feige09}, existing methods that use the XOS property would require an exponential number of evaluations in this scenario \citep{stan17}.
\section{Streaming Algorithm} \label{section:streaming}
In many applications, the data naturally arrives in a streaming fashion. This may be because the data is too large to fit in memory, or simply because the data is arriving faster than we can store it. Therefore, in the streaming setting we are shown one element at a time and we must immediately decide whether or not to keep this element. There is a limited number of elements we can hold at any one time and once an element is rejected it cannot be brought back. 

There are two general approaches for submodular maximization (under the cardinality constraint $k$) in the streaming setting: (i)  \citet{badanidiyuru2014streaming} introduced a thresholding-based framework where each element from the stream is added only if its marginal value is at least $\frac{1}{2k}$ of the optimum value.
The optimum is usually not known a priori, but they showed that with only a logarithmic increase in memory requirement, it is possible to efficiently guess the optimum value. 
(ii) \citet{buchbinder2015online} introduced streaming submodular maximization with preemption. 
At each step, they keep a solution $A$ of size $k$ with value $f(A)$. Each incoming element is added if and only if it can be exchanged with a current element of $A$ for a net gain of at least $\frac{f(A)}{k}$.
In this paper, we combine these two approaches in a novel and non trivial way in order to design a streaming algorithm (called \AlgStream) for the two-stage submodular maximization problem.

The goal of \AlgStream is to pick a set $S$ of at most $\ell$ elements from the data stream, where we keep sets $T_i \subseteq S,  1 \leq i \leq m $ as the solutions for functions $f_i$.
We continue to process elements until one of the two following conditions holds: (i) $\ell$ elements are chosen, or (ii) the data stream ends. 
This algorithm starts from empty sets $S$ and $\{T_i\}$. For every incoming element $u^t$, we use the subroutine \AlgExchange to determine whether we should keep that element or not. To formally describe \AlgExchange, we first need to define a few notations.

We define the marginal gain of adding an element $x$ to a set $A$ as follows:
$f_i(x| A)  =  f_i(x + A) - f_i(A).$
For an element $x$ and set $A$, $\Rep_i(x, A) $ is an element of $A$ such that removing it from $A$ and replacing it with $x$ results in the largest gain for function $f_i$, i.e.,
\begin{align} \label{eq:rep}
 \Rep_i(x, A) = \argmax_{y \in A} f_i(A+ x - y) - f_i(A). 
\end{align}
The value of this largest gain is represented by
\begin{align}\label{eq:delta}
\Delta_i(x, A) = f_i(A+ x - \Rep_i(x, A)) - f_i(A).
\end{align}
We define the gain of an element $x$ with respect to a set $A$ as follows:
\[ \nabla_i(x, A) = \left\{
\begin{array}{ll}
\mathbbm{1}_{\{f_i(x|A) \geq (\alpha / k) \cdot f_i(A) \}} f_i(x|A)& \mbox{if } |A| < k , \\
\mathbbm{1}_{\{\Delta_i(x, A) \geq (\alpha / k) \cdot f_i(A) \} } \Delta_i(x, A)& \mbox{o.w.,}
\end{array}
\right.\]
where $\mathbbm{1}$ is the indicator function. That is, $\nabla_i(x, A)$ tells us how much we can increase the value of $f_i(A)$ by either adding $x$ to $A$ (if $|A| < k$) or optimally swapping it in (if $|A| = k$). However, if this potential increase is less than $\frac{\alpha}{k} \cdot f_i(A)$, then $\nabla_i(x, A) = 0$. In other words, if the gain of an element does not pass a threshold of $\frac{\alpha}{k} \cdot f_i(A)$, we consider its contribution to be 0.

An incoming element is picked if the average of the $\nabla_i$ terms is larger than or equal to a threshold $\tau$.
Indeed, for $u^t$, the \AlgExchange routine computes the average gain $\frac{1}{m} \sum_{i=1}^{m} \nabla_i(u^t, T_i)$. If this average gain is at least $\tau$, then $u^t$ is added to $S$; $u^t$ is also added to all sets $T_i$ with $\nabla_i(u^t, T_i) > 0$.
\cref{alg:exchange} explains \AlgExchange in detail.

\begin{algorithm}[tb]
	\caption{\AlgExchange}
	\label{alg:exchange}
	\begin{algorithmic}[1]
		\STATE {\bfseries Input:} $u, S, \{T_i\},  \tau$ and $\alpha$ \hspace{0.2in}\COMMENT{$\nabla_i$ terms use $\alpha$}
		\IF{ $|S| < \ell$}
		\IF{$\frac{1}{m}\sum_{i=1}^{m} \nabla_i(u, T_i) \geq \tau$ 	}		 
		\STATE $S \gets S + u$
		\FOR {$1 \leq i \leq m$}
		\IF{$ \nabla_i(u, T_i)  > 0$}
		\IF{$|T_i| < k$}
		\STATE $T_i \gets T_i + u$
		\ELSE
		\STATE$T_i \gets T_i + u - \Rep(u, T_i)$
		\ENDIF
		\ENDIF
		\ENDFOR
		\ENDIF
		\ENDIF
	\end{algorithmic}
\end{algorithm}

Now we define the optimum solution to \cref{eq:problem-m} by
\begin{align*}
S^{m, \ell} = \argmax_{S \subseteq \Omega, |S| \leq \ell} \frac{1}{m} \sum_{i=1}^{m} \max_{|T| \leq k, T\subseteq S} f_i(T),
\end{align*}
where the optimum solution to each function is defined by
	\vspace{-5pt}
\begin{align*}
S^{m,\ell}_{i} = \argmax_{S \subseteq S^{m,\ell} , |S| \leq k} f_i(S).
\end{align*}
We define $\Opt = \frac{1}{m} \sum_{i=1}^{m}  f_i(S^{m,\ell}_i)$.

In \cref{section:know}, we assume that the value of \Opt is known a priori. This allows us to design \AlgStreamOpt, which has a constant factor approximation guarantee. Furthermore, in \cref{section:guess}, we show how we can efficiently guess the value of \Opt  by a moderate increase in the memory requirement. This enables us to finally explain \AlgStream.

\subsection{Knowing \Opt} \label{section:know}

If \Opt is somehow known a priori, we can use \AlgStreamOpt.
As shown in  \cref{alg:streaming_sieve}, we begin with empty sets $S$ and $\{T_i\}$. For each incoming element $u^t$, it uses \AlgExchange to update sets $S$ and $\{T_i\}$.
The threshold parameter $\tau$ in $\AlgExchange$ is set to $\frac{\Opt}{\beta \ell}$ for a constant value of $\beta$.
This threshold guarantees that if an element is added to $S$, then the average of functions $f_i$ over $T_i$'s is increased by a value of at least $\frac{\Opt}{\beta \ell}$.
Therefore, if we end up with $\ell$ elements in $S$, we guarantee that $\frac{1}{m} \sum_{i=1}^{m} f_i(T_i) \geq \frac{\Opt}{\beta}$. On the other hand, if $|S| < \ell$, we are still able to prove that our algorithm has picked good enough elements such that $\frac{1}{m} \sum_{i=1}^{m} f_i(T_i) \geq \frac{\alpha \cdot (\beta -1 ) \cdot \Opt}{\beta \cdot \left((\alpha+1)^2 + \alpha \right) } $.
The pseudocode of \AlgStreamOpt is provided in \cref{alg:streaming_sieve}. 

\begin{algorithm}[tb]
\caption{\AlgStreamOpt}
	\label{alg:streaming_sieve}
	\begin{algorithmic}[1]
		\STATE {\bfseries Input:} $\Opt$, $\alpha$ and $\beta$
		\STATE {\bfseries Output:} Sets $S$ and $\{T_i\}_{1 \leq i \leq m}$, where $T_i \subset S$
		\STATE $S \gets \emptyset$ and
		\STATE $T_i \gets \emptyset$ for all $1 \leq i \leq m$
		\FOR{every arriving element $u^t$}
		\STATE $\AlgExchange(u^t, S, \{T_i\},  \frac{\Opt}{\beta \ell}, \alpha )$
		\ENDFOR
		\STATE {\bfseries Return:} $S$ and $\{T_i\}_{1 \leq i \leq m}$
	\end{algorithmic}
\end{algorithm}

\begin{theorem} \label{theorem:streaming}
	The approximation factor of \AlgStreamOpt is at least $\min\{ \frac{\alpha (\beta -1 )}{\beta \cdot \left((\alpha+1)^2 + \alpha \right)} , \frac{1}{\beta}\}$. Hence, for $\alpha = 1$ and $\beta = 6$ the competitive ratio is at least $\nicefrac{1}{6}$.
\end{theorem}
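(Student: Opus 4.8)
The plan is to derive the two bounds inside the minimum from the two ways \AlgStreamOpt can terminate, and then observe that since we do not know in advance which case occurs, the guaranteed factor is their minimum; the final $\alpha=1,\beta=6$ claim is just substitution. Throughout I write $V=\frac1m\sum_{i=1}^m f_i(T_i)$ for the value of the final sets. Because each $T_i\subseteq S$ with $|T_i|\le k$, we have $G_m(S)\ge V$, so it suffices to lower bound $V$ against $\Opt$.

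The first ingredient is a bookkeeping fact about $V$. Whenever \AlgExchange accepts an incoming $u$ into $S$, every $T_i$ with $\nabla_i(u,T_i)>0$ is updated (an insertion if $|T_i|<k$, or the optimal swap via $\Rep_i(u,T_i)$ if $|T_i|=k$), and in both cases $f_i(T_i)$ rises by exactly $\nabla_i(u,T_i)\ge 0$, while the sets with $\nabla_i=0$ are untouched. Hence each accepted element raises $V$ by exactly $\frac1m\sum_i\nabla_i(u,T_i)\ge\tau=\frac{\Opt}{\beta\ell}$, and $V$ is nondecreasing. In the case $|S|=\ell$, summing these increments over the $\ell$ accepted elements gives $V\ge\ell\tau=\Opt/\beta$, which is the $\tfrac1\beta$ term.

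The substantive case is $|S|<\ell$, where the stream is exhausted before $S$ fills, so no element is ever rejected for lack of room and every non-accepted element must have failed a threshold. I would start from submodularity and monotonicity of each $f_i$: writing $O_i:=S^{m,\ell}_i$, we have $f_i(O_i)\le f_i(T_i)+\sum_{o\in O_i\setminus T_i} f_i(o\mid T_i)$, hence $\Opt-V\le\frac1m\sum_i\sum_{o\in O_i\setminus T_i} f_i(o\mid T_i)$ where each $T_i$ is the final set. The target is to bound this right-hand side by $\frac{(\alpha+1)^2}{\alpha}V+\frac{\Opt}{\beta}$, since this rearranges to $\Opt\le\frac{(\alpha+1)^2+\alpha}{\alpha}V+\frac{\Opt}{\beta}$ and, on solving for $V$, to the claimed factor $\frac{\alpha(\beta-1)}{\beta((\alpha+1)^2+\alpha)}$. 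Two tools drive the bound. First, a swap inequality: for a full set, $f_i(o\mid T_i)\le\Delta_i(o,T_i)+\frac{f_i(T_i)}{k}$, proved by removing from $T_i$ the element of smallest marginal, which by submodularity is at most $\frac{f_i(T_i)}{k}$. Second, the two thresholds: an optimal element not inserted into $T_i$ on arrival witnesses $\nabla_i=0$, i.e. its gain against the state $T_i^{(o)}$ of $T_i$ just before $o$ arrives is below $\frac{\alpha}{k}f_i(T_i^{(o)})$, whereas an element rejected from $S$ witnesses $\frac1m\sum_j\nabla_j(o,T_j^{(o)})<\tau$.

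The main obstacle is that \AlgStream is preemptive: an optimal $o$ may be inserted into $T_i$ and later swapped out, so its marginal against the \emph{final} $T_i$ is not controlled by the threshold it faced on arrival, because $T_i^{(o)}$ and the final $T_i$ are not nested and submodularity cannot transfer the bound. To get around this I would combine value-monotonicity $f_i(T_i^{(o)})\le f_i(T_i)$ with a telescoping identity: every operation on $T_i$ is value-nondecreasing and the total of all positive increments equals $f_i(T_i)$, so the increments attributable to optimal elements that were inserted and later evicted are bounded, after averaging, by a constant-in-$\alpha$ multiple of $V$. The remaining optimal elements split into those rejected at the $T_i$-level, bounded via the swap inequality together with the $\frac{\alpha}{k}f_i(T_i)$ threshold (an $(\alpha+1)$-type multiple of $V$ after summing over the at most $k$ elements per function), and those rejected at the $S$-level, whose aggregated contribution is controlled by summing $\frac1m\sum_j\nabla_j(o,T_j^{(o)})<\tau$ over the at most $\ell$ optimal elements to yield $\ell\tau=\Opt/\beta$. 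Assembling the three contributions gives the inequality above; substituting $\alpha=1,\beta=6$ gives $\tfrac16$. The delicate point — where I expect the exact constant $(\alpha+1)^2+\alpha$ to be pinned down — is the careful accounting that prevents the preemption charging, the per-function $T_i$-threshold, and the global $S$-threshold from double-counting.
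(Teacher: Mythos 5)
Your first case ($|S|=\ell$, telescoping the per-acceptance gain of at least $\tau=\Opt/(\beta\ell)$) matches the paper and is fine, as is the overall shape of the second case. The genuine gap is exactly at the point you flag as ``delicate'': the evicted optimal elements. You decompose $f_i(O_i)\le f_i(T_i)+\sum_{o\in O_i\setminus T_i}f_i(o\mid T_i)$ against the \emph{final} $T_i$ and then propose to charge the terms for inserted-then-evicted elements to the telescoping sum of positive increments of $f_i(T_i)$. That charge does not go through: the increment $\nabla_i(o,T_i^{(o)})$ that $o$ contributed at insertion time is a statement about $f_i$ relative to $T_i^{(o)}$, while the quantity you must bound is $f_i(o\mid T_i^n)$ relative to the non-nested final set; submodularity gives no inequality in the needed direction, and one can arrange (e.g.\ with coverage functions) that $o$ enters with a tiny gain, is evicted, and yet has a large marginal against the final $T_i^n$ because the elements that overlapped it were themselves later swapped out. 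Value-monotonicity of $f_i(T_i^t)$ does not repair this, since it controls set \emph{values}, not marginals of outside elements.

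The paper avoids this entirely by never expanding against the final $T_i$. It introduces $A_i^n=\bigcup_t T_i^t$, the set of \emph{all} elements ever held in $T_i$, and writes $f_i(S_i^{m,\ell})\le f_i(A_i^n)+\sum_{o\in S_i^{m,\ell}\setminus A_i^n}f_i(o\mid A_i^n)$. Evicted optimal elements lie in $A_i^n$ and so contribute nothing to the sum; for elements never inserted, $T_i^{(o)}\subseteq A_i^{(o)}\subseteq A_i^n$ makes the arrival-time thresholds transfer to $f_i(o\mid A_i^n)$ by submodularity. The price is the leading term $f_i(A_i^n)$, and the missing ingredient in your write-up is the companion bound $f_i(T_i^n)\ge\frac{\alpha}{\alpha+1}f_i(A_i^n)$ (\cref{lemma:bound-S-A}), proved by induction using both the swap inequality of \cref{lemma:lowerbound-replace} and the fact that accepted swaps gain at least $\frac{\alpha}{k}f_i(T_i^{t-1})$. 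That lemma is precisely what supplies the $\frac{\alpha+1}{\alpha}$ piece of the constant $(\alpha+1)^2+\alpha$; without it, or an equivalent mechanism for the evicted elements, your argument cannot be completed as stated.
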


\begin{proof} 
Let $S^t$ represent the set of chosen elements at step $t$. Also, we define $T^t_i \subseteq S^t$ as the current solution for function $f_i$ at step $t$. 
We also define $A^t_i= \bigcup_{1 \leq j \leq t} T^t_i$, i.e., $A^t_i$ is the set of all the elements have been in the set $T_i$ till step $t$. 
Note that this set includes elements that have been in $T_i$ at some point and might be deleted at later steps.
We first lower bound $f_i(T_i^t)$ based on value of $f_i(A_i^t)$.
\begin{lemma} \label{lemma:bound-S-A}
	For all $1 \leq i \leq m$, we have:
	\begin{align*}
	f_i(T^t_i) \geq \frac{\alpha}{\alpha+1} f_i(A_i^t).
	\end{align*}
\end{lemma}
\begin{proof}
	We proof this lemma by induction. 
	For the first $k$ additions to set $T_i^t$, the two sets $T_i^t$ and $A_i^t$ are exactly the same, i.e., we have $f_i(T_i^t) = f_i(A_i^t)$. Therefore the lemma is correct for them.  Next we show that lemma is correct for cases after the first $k$ additions, i.e., when an incoming element $u^t$ replaces one element of $T_i^{t-1}.$ We have the following lemma.
	\begin{lemma} \label{lemma:lowerbound-replace}
		For $1 \leq i \leq m$ and all $u^t$, we have:
		\begin{align*}
		\Delta_i(u^t, T_i^{t-1}) \geq f_i(u^t | A_i^{t-1}) - f_i(T_i^{t-1}) / k.
		\end{align*}
	\end{lemma}
	
	\begin{proof} To prove this lemma we have the following 
		\begin{align*}
		\Delta_i(u^t, T_i^{t-1})  &= f_i(T_i^{t-1}+ u^t - \Rep_i(u^t, T_i^{t-1})) - f_i(T_i^{t-1}) \\
		&\overset{(a)}{\geq} \dfrac{\sum_{u \in T_i^{t-1}} f_i(T_i^{t-1}+ u^t - u)) - f_i(T_i^{t-1}) }{k}  \\
		&  = \dfrac{\sum_{u \in T_i^{t-1}} f_i(T_i^{t-1}+ u^t - u) - f_i(T_i^{t-1} - u) +  f_i(T_i^{t-1}  - u) - f_i(T_i^{t-1}) }{k} \\ 
		&	\overset{(b)}{\geq}  \dfrac{\sum_{u \in T_i^{t-1}} f_i(T_i^{t-1}+ u^t ) - f_i(T_i^{t-1} ) }{k} + 
		\dfrac{\sum_{u \in T_i^{t-1}}   f_i(T_i^{t-1}  - u) - f_i(T_i^{t-1}) }{k} \\
		&	 \overset{(c)}{\ge}  f_i(u^t | T_i^{t-1}) - f_i(T_i^{t-1}) / k  \overset{(d)}{\geq}  f_i(u^t | A_i^{t-1}) - f_i(T_i^{t-1}) / k .
		\end{align*}
		Inequality $(a)$ is true because $\Rep_i(u^t, T_i^{t-1}) $ is the element with the largest increment when it is exchanged with $u^t$. 
		Therefore, it should be at least equal to the average of all possible exchanges. 
		Note that $T_i^{t-1}$ has at most $k$ elements. 
		Inequalities $(b)$ and $(d)$ result from submodularity of $f_i$.
		Also, from submodularity of $f_i$, we have 
		\[ f_i(T_i^{t-1}) - f_i(\emptyset) \geq \sum_{u \in T_i^{t-1}}  f_i(T_i^{t-1}) -   f_i(T_i^{t-1}  - u),\] 
		which results in inequality $(c)$.
	\end{proof}
	Now, assume \cref{lemma:bound-S-A} is true for time $t-1$, i.e.,  $f_i(T^{t-1}_i) \geq \frac{\alpha}{\alpha+1} f_i(A_i^{t-1})$. 
	We prove that it is also true for time $t$.
	First note that if $u^t$ is not accepted by the algorithm for the $i$-th function then $T_i^{t} = T_i^{t-1} $ and $A_i^{t} = A_i^{t-1}$; therefore the lemma is true for $t$.
	If $u^t$ is chosen to be added to $T_i^{t-1}$, from the definition of $\nabla(u^t, T_i^{t-1})$, we have $\Delta_i(u^t, T_i^{t-1}) > \alpha/ k \cdot f_i(T_i^{t-1}).$ 
	From this fact and \cref{lemma:lowerbound-replace}, we have:
	\begin{align*}
	f_i(T_i^t) - f_i(T_i^{t-1}) & \geq \max \{ f_i(u^t | A_i^{t-1}) - f_i(T_i^{t-1}) / k, \alpha/ k \cdot f_i(T_i^{t-1}) \}\\
	& \geq \dfrac{\alpha \cdot ( f_i(u^t | A_i^{t-1}) - f_i(T_i^{t-1}) / k) + \alpha/ k \cdot f_i(T_i^{t-1})}{\alpha + 1} \\
	& \geq \dfrac{\alpha}{\alpha+1} \cdot f_i(u^t | A_i^{t-1}) = \dfrac{\alpha}{\alpha+1}  \cdot\left[ f_i(A_i^{t}) - f_i(A_i^{t-1}) \right] \\
 &	\rightarrow   f_i(T_i^t) \geq  \dfrac{\alpha}{\alpha+1} \cdot f_i(A_i^{t}).
	\end{align*}
\end{proof}

\begin{cor} \label{cor:upperboun-margin}
	If $\Delta_i(u^t, T_i^{t-1}) < \alpha/ k \cdot f_i(T_i^{t-1})$ then we have:
	\[ f_i(u^t | A_i^{n}) \overset{(a)}{\leq} f_i(u^t | A_i^{t-1})  \overset{(b)}{\leq} \dfrac{\alpha + 1}{ k} \cdot f_i(T_i^{t-1})   \overset{(c)}{\leq} \dfrac{\alpha + 1}{ k}  \cdot f_i(T_i^{n}).\]
\end{cor}
\begin{proof}
	Inequality $(a)$ is true because of submodularity of $f_i$ and the fact that $A_i^{t-1} \subseteq\ A_i^n$.
	Inequality $(b)$ concludes form \cref{lemma:lowerbound-replace}. Since $f_i(T_i^{t})$ is a nondecreasing function of $t$, then $(c)$ is true.
\end{proof}

Next, we use \cref{lemma:bound-S-A,lemma:lowerbound-replace} and Corollary~\ref{cor:upperboun-margin}, to prove the approximation factor of the algorithm.
Note that if at the end of algorithm $|S^n| = \ell$, then we have:
\begin{align} \label{eq:ell-element}
\frac{1}{m}\sum_{i=1}^{m} f_i(T_i^n) = \frac{1}{m} \sum_{t=1}^{n} \sum_{i=1}^{m}  \left[ f_i(T_i^t) - f_i(T_i^{t-1}) \right]= \frac{1}{m} \sum_{t=1}^{n} \mathbbm{1}_{\{u^t \in S^n\}} \cdot \nabla_i(u^t, T_i^t)  \geq \frac{\Opt}{\beta}.
\end{align}
This is true because the additive value after adding an element to $S^t$ is at least $\frac{\Opt}{\beta \ell}$. 
Next consider the case where $|S| < \ell$.
First note that for an element $u^t \in S^{m,\ell}_i$,  which does not belong to set $A_i^n$, we have two different possibilities: 
(i) $\Delta_i(u^t, T_i^{t-1}) < \alpha / k \cdot f_i(T_i^{t-1}) $, or (ii) $\Delta_i(u^t, T_i^{t-1}) \geq \alpha / k \cdot f_i(T_i^{t-1}) $ and $\frac{1}{m}  \sum_{i=1}^{m} \nabla_i(u^t, T_i^{t-1}) < \frac{\Opt}{\beta \ell}.$ Therefore, we have:
\begin{align} \label{eq:approx-uppber}
& \sum_{i=1}^{m}  f_i(S_i^{m,\ell}) \leq \sum_{i=1}^{m} \left[ f_i(A^{n}_i) + \sum_{u^t \in S_i^{m,\ell} \setminus A^{n}_i} f_i(u^t | A^{n}_i) \right]  \nonumber \\
& =  \sum_{i=1}^{m}  f_i(A^{n}_i)  +  
\sum_{i=1}^{m} \sum_{u^t \in S^{m,\ell} \setminus A^{n}_i } \mathbbm{1}_{\{u^t \in S_i^{m,\ell} \} } \cdot f (u^t | A^{n}_i ) \nonumber \\
& =  \sum_{i=1}^{m}  f_i(A^{n}_i)  +  
\sum_{i=1}^{m} \sum_{u^t \in S^{m,\ell}} \mathbbm{1}_{\{u^t \in S_i^{m,\ell} \} } \cdot \nonumber 
 \Big[  \mathbbm{1}_{\{\Delta_i(u^t, T_i^{t-1}) < \alpha / k \cdot f_i(T_i^{t-1}) \} } \cdot f_i (u^t | A^{n}_i)  + \nonumber  \\
&  \hspace{1.24in}  \mathbbm{1}_{\{\Delta_i(u^t, T_i^{t-1}) \geq \alpha / k \cdot f_i(T_i^{t-1}) \text{ and } \sum_{i=1}^{m} \nabla_i(u^t, T_i^{t-1}) < \frac{\Opt}{\beta \ell}  \} } \cdot f_i (u^t | A^{n}_i) \Big].
\end{align}
For the three terms on the rightmost side of \cref{eq:approx-uppber} we have the following inequalities. For the first term, from \cref{lemma:bound-S-A}, we have:
\begin{align}  \label{eq:term-1}
\sum_{i=1}^{m}  f_i(A^{n}_i) \leq \dfrac{\alpha+1}{\alpha} \sum_{i=1}^{m}  f_i(T^{n}_i).
\end{align}
For the second term, we have:
\begin{align}  \label{eq:term-2}
\sum_{i=1}^{m} \sum_{u^t \in S^{m,\ell}}  \mathbbm{1}_{\{u^t \in S_i^{m,\ell} \} } \cdot &
\mathbbm{1}_{\{\Delta_i(u^t, T_i^{t-1}) < \alpha/ k \cdot f_i(T_i^{t-1}) \} } \cdot f_i (u^t | A^{n}_i) \nonumber \\
& \overset{(a)}{\leq} 	
\sum_{i=1}^{m} \sum_{u^t \in S^{m,\ell}_i}  \frac{\alpha + 1}{k} f_i(T_i^n)
\overset{(b)}{\leq} (\alpha + 1) \cdot \sum_{i=1}^{m} f_i(T_i^{n}).
\end{align}
Inequality $(a)$  is the result of Corollary~\ref{cor:upperboun-margin}. Inequality $(b)$ is true because we have at most $k$ elements in set $S_i^{m,\ell}.$
Note that for $u^t$ with $ \sum_{i=1}^{m} \nabla_i(u^t, T_i^{t-1}) < \frac{\Opt}{\beta \ell} $ we have:
\begin{align} \label{eq:bound-rejected}
 \frac{1}{m} \sum_{i=1}^{m}  \mathbbm{1}_{\{u^t \in S_i^{m,\ell} \} } & \cdot \mathbbm{1}_{\{\Delta_i(u^t, T_i^{t-1}) \geq \alpha / k \cdot f_i(T_i^{t-1})\} } \left[ f_i(u^t | A_i^{t-1}) - f_i(T_i^{t-1}) / k \right] \nonumber	\\
&\overset{(a)}{\leq} 
\frac{1}{m}  \sum_{i=1}^{m}  \mathbbm{1}_{\{u^t \in S_i^{m,\ell} \} } \nabla_i(u^t, T_i^{t-1})
   \overset{(b)}{\leq} \frac{1}{m}  \sum_{i=1}^{m} \nabla_i(u^t, T_i^{t-1}) < \frac{\Opt}{\beta \ell}.
\end{align}
Inequality $(a)$ results from \cref{lemma:lowerbound-replace} and $(b)$ is true because $\nabla_i(u^t, T_i^{t-1}) \geq 0$ for $1 \leq i \leq m$.
Therefore, from \cref{eq:bound-rejected} and submodularity of $f_i$ and its non-negativity, we have:
\begin{align*}
& \frac{1}{m} 	\sum_{i=1}^{m}  \mathbbm{1}_{\{u^t \in S_i^{m,\ell} \} } \cdot \mathbbm{1}_{\{\Delta_i(u^t, T_i^{t-1}) \geq \alpha/ k \cdot f_i(T_i^{t-1}) \text{ and } \sum_{i=1}^{m} \nabla_i(u^t, T_i^{t-1}) < \frac{\Opt}{\beta \ell}  \} } 	  \cdot f_i (u^t | A^{n}_i) \\  
& \hspace{2.9in}
\leq \dfrac{\Opt}{\beta \ell} +	\dfrac{1}{km} \cdot \sum_{i=1}^{m}  \mathbbm{1}_{\{u^t \in S_i^{m,\ell} \} } \cdot f_i(T_i^n).
\end{align*}
Consequently,
\begin{align} \label{eq:term-3}
& \frac{1}{m} 	\sum_{i=1}^{m} \sum_{u^t \in S^{m,\ell}}  \mathbbm{1}_{\{u^t \in S_i^{m,\ell} \} } \cdot \mathbbm{1}_{\{\Delta_i(u^t, T_i^{t-1}) \geq \alpha/ k \cdot f_i(T_i^{t-1}) \text{ and } \sum_{i=1}^{m} \nabla_i(u^t, T_i^{t-1}) < \frac{\Opt}{2 \ell}  \} }  \cdot f_i (u^t | A^{n}_i)  \nonumber \\
&\hspace{0.35in} \leq \frac{1}{m}  	\sum_{u^t \in S^{m,\ell}} \left[  \dfrac{\Opt}{\beta \ell} + 	\dfrac{1}{k} \cdot \sum_{i=1}^{m}  \mathbbm{1}_{\{u^t \in S_i^{m,\ell} \} } \cdot f_i(T_i^n) \right] \leq  \dfrac{\Opt}{\beta} +  \frac{1}{m}  \sum_{i=1}^{m}  f_i(T_i^n).
\end{align}
Using \cref{eq:term-1,eq:term-2,eq:term-3} we have:
\begin{align}
 \Opt & =\frac{1}{m} 	\sum_{i=1}^{m}  f_i(S_i^{m,\ell}) \nonumber \\
&  \leq \dfrac{\alpha +1}{\alpha} \cdot \frac{1}{m}  \sum_{i=1}^{m}  f_i(T^{n}_i) +  (\alpha + 1) \cdot \frac{1}{m}  \sum_{i=1}^{m} f_i(T_i^{n}) +  \dfrac{\Opt}{\beta} + \frac{1}{m}  \sum_{i=1}^{m}  f_i(T_i^n).
\end{align}
This results in 
\begin{align} \label{eq:less-than-ell}
\dfrac{\alpha \cdot (\beta - 1) \cdot \Opt}{\beta \cdot \left((\alpha +1)^2 + \alpha\right)}	\leq  \frac{1}{m}  \sum_{i=1}^{m} f_i(T_i^{n}).
\end{align}
Combination of \cref{eq:ell-element,eq:less-than-ell} proves \cref{theorem:streaming}.
\end{proof}

\subsection{Guessing \Opt in the Streaming Setting} \label{section:guess}

In this section, we discuss ideas on how to efficiently guess the value of $\Opt$, which is generally not known a priori.
First consider \cref{lemma:opt-bounds} where it provides bounds on $\Opt$.
\begin{lemma} \label{lemma:opt-bounds}
	Assume $\delta = \frac{1}{m} \max_{u \in \Omega} \sum_{i=1}^{m} f_i(u)$. Then we have
	$\delta \leq  \Opt \leq  \ell \cdot \delta. $
\end{lemma}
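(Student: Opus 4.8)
The plan is to prove the two bounds separately, each following from elementary properties of monotone submodular functions together with the definitions of $\Opt$ and $\delta$. Throughout I assume the standard normalization $f_i(\emptyset) = 0$, so that monotonicity yields non-negativity of the singleton values $f_i(\{u\})$ and submodularity yields subadditivity, i.e.\ $f_i(T) \le \sum_{u \in T} f_i(\{u\})$ for every $T$. Recall also that $\Opt = G_m(S^{m,\ell})$, since $S^{m,\ell}_i$ is by definition the best size-$k$ subset of $S^{m,\ell}$ for $f_i$.

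For the lower bound $\delta \le \Opt$, I would exploit the optimality of $S^{m,\ell}$. Let $u^\star = \argmax_{u \in \Omega} \sum_{i=1}^{m} f_i(\{u\})$ be the element attaining $\delta$. The singleton $\{u^\star\}$ is a feasible first-stage set (its size $1$ is at most $\ell$), and it is itself a feasible size-$k$ solution for every $f_i$, so $G_m(\{u^\star\}) = \frac{1}{m}\sum_{i=1}^{m} f_i(\{u^\star\}) = \delta$. Since $\Opt = G_m(S^{m,\ell})$ is the maximum of $G_m$ over all first-stage sets of size at most $\ell$, we conclude $\Opt \ge G_m(\{u^\star\}) = \delta$.

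For the upper bound $\Opt \le \ell \cdot \delta$, I would start from $\Opt = \frac{1}{m}\sum_{i=1}^{m} f_i(S^{m,\ell}_i)$ and use that each $S^{m,\ell}_i \subseteq S^{m,\ell}$ with $|S^{m,\ell}| \le \ell$. Applying subadditivity and then non-negativity of the singleton values gives $f_i(S^{m,\ell}_i) \le \sum_{u \in S^{m,\ell}_i} f_i(\{u\}) \le \sum_{u \in S^{m,\ell}} f_i(\{u\})$. Summing over $i$ and swapping the order of summation yields $\Opt \le \frac{1}{m}\sum_{u \in S^{m,\ell}} \sum_{i=1}^{m} f_i(\{u\})$; bounding each inner sum by its maximum value $m\delta$ and using $|S^{m,\ell}| \le \ell$ completes the estimate.

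Neither direction presents a genuine obstacle. The only points requiring care are bookkeeping ones: confirming that the best size-$k$ solution over the singleton $\{u^\star\}$ is the singleton itself, so that $G_m(\{u^\star\}) = \delta$ exactly, and making explicit the normalization $f_i(\emptyset)=0$ that licenses both subadditivity and non-negativity. The argument is independent of the algorithm and holds for any collection of monotone submodular functions.
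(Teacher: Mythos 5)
Your proof is correct and follows essentially the same route as the paper: the lower bound via feasibility of the best singleton (the paper just calls this ``trivial''), and the upper bound via subadditivity, enlarging each $S^{m,\ell}_i$ to $S^{m,\ell}$, swapping the order of summation, and bounding by $\ell\cdot\delta$. Your write-up is in fact slightly more careful, since the paper's displayed chain writes the subadditivity step as an equality where it should be an inequality.
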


\begin{proof}
The lower bound is trivial.
For the upper bound we have
\[ \Opt = \frac{1}{m} \sum_{i=1}^{m} \sum_{u \in S^{m,\ell}_i}   f_i(u)  \leq  \frac{1}{m} \sum_{u \in S^{m,\ell}}\sum_{i=1}^{m}  f_i(u)  \leq \ell \cdot \delta.\]
\end{proof}

Now consider the following set
\[ \Tau = \{ (1 + \epsilon)^l \mid l \in \mathbb{Z}, \frac{\delta }{1 + \epsilon} \leq (1 + \epsilon)^l \leq \ell \cdot \delta\}. \]
We define $\tau_l = (1 + \epsilon)^l $.
From \cref{lemma:opt-bounds}, we know that one of the $\tau_l \in \Tau$ is a good estimate of \Opt. 
More formally, there exists a $\tau_l \in \Tau$ such that $\frac{ \Opt}{1 + \epsilon} \leq \tau_l \leq \Opt$.
For this reason, we should run parallel instances of \cref{alg:streaming_sieve}, one for each $\tau_l \in \Tau$. 
The number of such thresholds is $O(\frac{\log \ell}{\epsilon})$. 
 The final answer is the best solution obtained among all the instances.

Note that we do not know the value of $\delta$ in advance. 
So we would need to make one pass over the data to learn $\delta$, which is not possible in the streaming setting.
The question is, can we get a good enough estimate of $\delta$ within a single pass over the data?
Let's define 
$\delta^t = \frac{1}{m} \max_{u^{t'} , t' \leq t} \sum_{i=1}^{m} f_i(u^{t'})$ as our current guess for the maximum value of $\delta$.
Unfortunately, getting $\delta^t$ as an estimate of $\delta$ does not resolve the problem. This is due to the fact that a newly instantiated threshold $\tau$ could potentially have already seen elements with additive value of $\tau / (\beta \ell)$.
For this reason, we instantiate thresholds for an increased range of $\delta^t/(1 + \epsilon) \leq \tau_l \leq \ell \cdot \beta \cdot \delta^t$.
To show that this new range would solve the problem, first consider the next lemma.
\begin{lemma} \label{lemma:upperbound-gain}
	For the maximum gain of an incoming element $u^t$, we have:
	\[\frac{1}{m} \sum_{i=1}^{m} \nabla_i(u^t, T_i^{t-1}) \leq  \delta^t.\]
\end{lemma}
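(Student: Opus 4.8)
The plan is to prove the stronger, element-wise bound $\nabla_i(u^t, T_i^{t-1}) \leq f_i(u^t)$ for every $1 \leq i \leq m$. Once this is in hand, summing over $i$, dividing by $m$, and invoking the definition $\delta^t = \frac{1}{m}\max_{u^{t'},\, t' \leq t}\sum_{i=1}^{m} f_i(u^{t'})$ immediately yields the claim, since $u^t$ is itself one of the elements seen by step $t$ and therefore $\frac{1}{m}\sum_{i=1}^{m} f_i(u^t) \leq \delta^t$. Thus the whole argument reduces to controlling a single gain term by the singleton value of $u^t$.

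First I would recall that $\nabla_i(u^t, T_i^{t-1})$ is, by definition, either $0$ or one of two gain quantities, according to whether $|T_i^{t-1}| < k$ or $|T_i^{t-1}| = k$. When the indicator vanishes we have $\nabla_i(u^t,T_i^{t-1}) = 0 \leq f_i(u^t)$ by non-negativity of $f_i$, so it suffices to bound the two gains themselves. In the first case ($|T_i^{t-1}| < k$) the gain is the marginal $f_i(u^t \mid T_i^{t-1})$, and submodularity together with non-negativity gives $f_i(u^t \mid T_i^{t-1}) \leq f_i(u^t \mid \emptyset) \leq f_i(u^t)$ at once.

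The harder case, and the main obstacle, is the swap case ($|T_i^{t-1}| = k$), where the gain is $\Delta_i(u^t, T_i^{t-1}) = f_i(T_i^{t-1} + u^t - y) - f_i(T_i^{t-1})$ with $y = \Rep_i(u^t, T_i^{t-1})$; here the set being compared against does not contain $u^t$ as a fresh marginal addition, so submodularity cannot be applied directly. The key idea is to split the swap into a removal of $y$ followed by an insertion of $u^t$:
\[
\Delta_i(u^t, T_i^{t-1}) = \bigl[f_i(T_i^{t-1} - y) - f_i(T_i^{t-1})\bigr] + f_i\bigl(u^t \mid T_i^{t-1} - y\bigr) \leq f_i\bigl(u^t \mid T_i^{t-1} - y\bigr) \leq f_i(u^t),
\]
where the bracketed removal term is non-positive by monotonicity, and the final inequality follows from submodularity applied to $\emptyset \subseteq T_i^{t-1} - y$ together with non-negativity of $f_i$. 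This establishes $\nabla_i(u^t, T_i^{t-1}) \leq f_i(u^t)$ in all cases, and the reduction in the first paragraph then completes the proof. The only subtlety to watch is that the decomposition above is valid regardless of which element $y$ the $\Rep_i$ operator selects, so no optimality property of $y$ is needed beyond $y \in T_i^{t-1}$.
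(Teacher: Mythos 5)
Your proof is correct and follows essentially the same route as the paper: bound each $\nabla_i(u^t,T_i^{t-1})$ by $f_i(u^t)$ using one application of monotonicity and one of submodularity (the paper passes through the intermediate quantity $f_i(u^t \mid T_i^{t-1})$ rather than $f_i(u^t \mid T_i^{t-1}-y)$, a trivial difference), then sum over $i$ and invoke the definition of $\delta^t$.
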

\begin{proof}
We have
\[ \frac{1}{m} \sum_{i=1}^{m} \nabla_i(u^t, T_i^{t-1}) \overset{(a)}{\leq} \frac{1}{m} \sum_{i=1}^{m} f_i(u^t | T_i^{t-1} ) \overset{(b)}{\leq}  \frac{1}{m} \sum_{i=1}^{m} f_i(u^t) \overset{(c)}{\leq} \delta_t. \]
For inequality $(a)$ first note that $f_i(u^t | T_{i}^{t-1}) \geq 0$; 
therefore it suffices to show that for all $\nabla_i(u^t, T_i^{t-1}) > 0$ we have $\nabla_i(u^t, T_i^{t-1}) \leq f_i(u^t| T_i^{t-1})$.
So, for $\nabla_i(u^t, T_i^{t-1}) > 0$, consider the two following cases: (i) if $|T_{i}^{t-1}| < k$, then $\nabla_i(u^t, T_i^{t-1}) = f_i(u^t| T_i^{t-1})$. (ii)  if $|T_{i}^{t-1}| < k$, then $\nabla_i(u^t, T_i^{t-1}) =  \Delta_i(u^t, T_i^{t-1}) = f_i(T_i^{t-1}+ u^t - \Rep_i(u^t, T_i^{t-1})) - f_i(T_i^{t-1}) \leq f_(T_i^{t-1} + u^t) -f_i(T_i^{t-1})$, where the last inequality follows from the monotonicity of $f_i$.  Inequality $(b)$ results from the submodularity of $f_i$. 
The inequality $(c)$ follows from the definition of $\delta_t$.
\end{proof}

Next, we need to show that for a newly instantiated threshold $\tau$ at time $t+1$, the gain of all elements which arrived before time $t+1$ is less than $\tau$; 
therefore this new instance of the algorithm would not have picked them if it was instantiated from the beginning. 
To prove this, note that since $\tau$ is a new threshold at time $t+1$, we have $\tau > \frac{\ell \cdot \beta \cdot \delta^t}{\beta \cdot \ell} = \delta^t$. From \cref{lemma:upperbound-gain} we conclude that the marginal gain of all the $u^{t'}, t' \leq t$ is less than $\tau$ and \AlgExchange would not have picked them.
The \AlgStream algorithm is shown pictorially in Figure \ref{fig:sieve-streaming} and the pseudocode  is given in \cref{alg:streaming_sieve_guess}. 

\begin{algorithm}[tb]
	\caption{\AlgStream}
		\label{alg:streaming_sieve_guess}
	\begin{algorithmic}[1]
		\STATE $\Tau^0 = \{(1 + \epsilon)^l | l \in \mathbb{Z} \}$
		\STATE For each $\tau \in \Tau^0$ set $S_{\tau} \gets \emptyset$ and $T_{\tau, i} \gets \emptyset$ for all $1 \leq i \leq m$ \hspace{0.1in} \COMMENT{Maintain the sets lazily}
		\STATE	$\delta^0 \gets 0$
		\FOR{every arriving element $u^t$}
		\STATE $\delta^t = \max \{ \delta^{t-1} ,  \frac{1}{m} \sum_{i=1}^{m} f_i(u^t) \}$
		\STATE $\Tau^t = \{ (1 + \epsilon)^l \mid l \in \mathbb{Z}, \frac{\delta^t }{(1 + \epsilon) \cdot \beta \cdot \ell} \leq (1 + \epsilon)^l \leq  \delta^t \}$
		\STATE	Delete all $S_\tau$ and $T_{\tau,i}$ such that $\tau \notin \Tau^t$
		\FOR{all $\tau \in \Tau^t$}
		\STATE $\AlgExchange(u^t, S_\tau, \{T_{\tau,i}\}_{1 \leq i \leq m}, \tau, \alpha )$
		\ENDFOR 
		\ENDFOR 
		\STATE {\bfseries Return:} $\argmax_{\tau \in \Tau^n} \{ \frac{1}{m} \sum_{i=1}^{m} f_i(T_{\tau, i}) \}$ 
	\end{algorithmic}
\end{algorithm}

\begin{figure}[tp]
	\centering
	\includegraphics[width=0.75\textwidth]{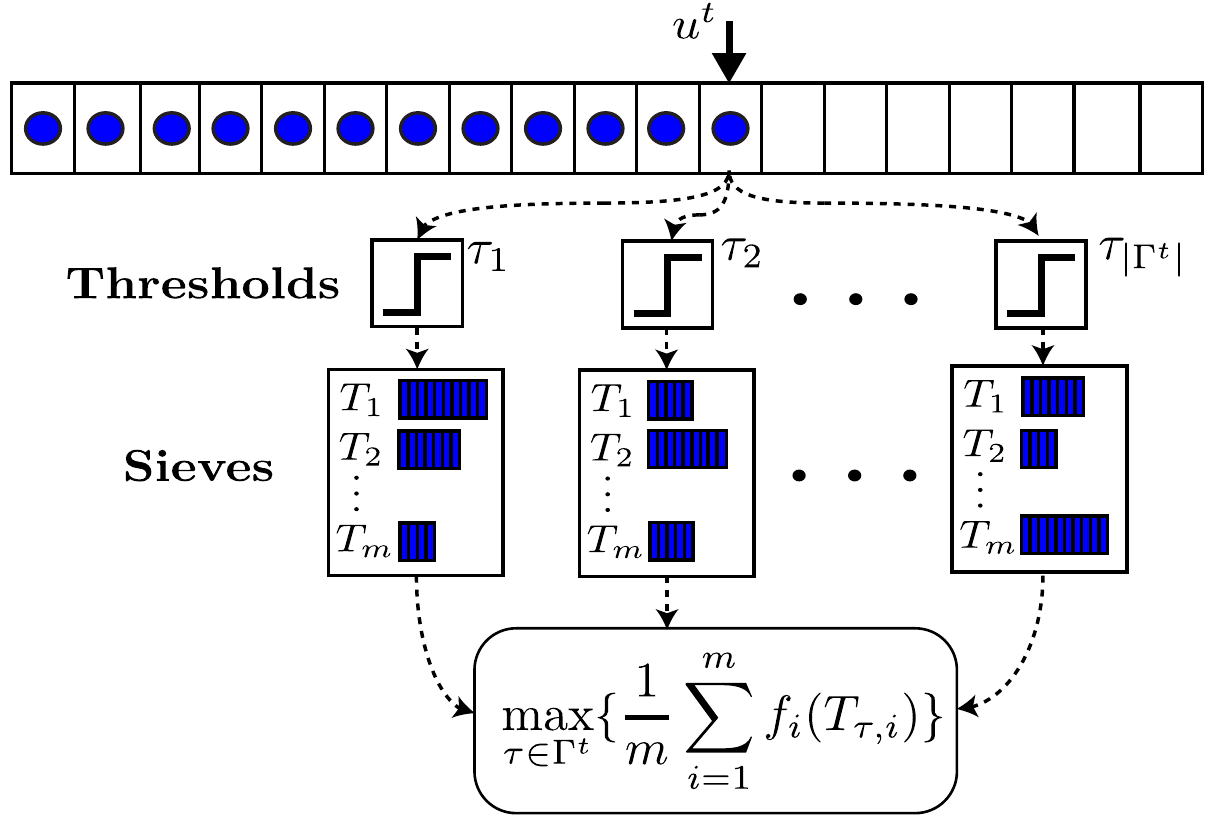}
	\caption{Illustration of \AlgStream.  Stream of data arrives at any arbitrary order. At each step $t$, the set of thresholds $\Tau^t$ is updated based on a new estimation of $\delta^t$. Note that at each time the number of such thresholds is bounded. For each $\tau \in \Tau^t$ there is a running instance of the streaming algorithm.}
	\label{fig:sieve-streaming}
\end{figure}

\begin{theorem} \label{theorem:streaming_guess}
	\cref{alg:streaming_sieve_guess} satisfies the following properties:
	\begin{itemize}
	\item It outputs sets $S$ and $\{T_i\}\subset S$ for $1 \leq i \leq m$, such that $|S| \leq \ell, |T_i| \leq k$ and $\frac{1}{m} \sum_{i=1}^{m} f_i(T_i) \geq \min\{ \frac{\alpha (\beta -1 )}{\beta  \left((\alpha+1)^2 + \alpha \right)} , \frac{1}{\beta(1 + \epsilon)}\}  \cdot \Opt$.
	\item  For $\alpha = 1$ and $\beta =\frac{6 + \epsilon}{1 + \epsilon}$ the approximation factor is at least $\frac{1}{6 + \epsilon}$. For $\epsilon = 1.0 $ the approximation factor is $\nicefrac{1}{7}$.
	\item It makes one pass over the dataset and stores at most $O(\frac{\ell \log \ell}{\epsilon})$ elements. The update time per each element is $O(\frac{ k m \log \ell}{\epsilon})$.
	\end{itemize}
\end{theorem}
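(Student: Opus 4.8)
The plan is to reduce the entire approximation analysis to the known-\Opt case already settled by \cref{theorem:streaming}, by exhibiting a single threshold surviving in $\Tau^n$ whose instance inherits that theorem's guarantee, and then to handle the bookkeeping (parameter substitution, space, and time) separately. The conceptual work is entirely in justifying that the lazy, online maintenance of thresholds costs us nothing relative to running each instance from the first element.

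First I would establish the existence of a good threshold. By \cref{lemma:opt-bounds} we have $\delta \le \Opt \le \ell\delta$, so the ideal threshold $\frac{\Opt}{\beta\ell}$ that \AlgStreamOpt would have used with known \Opt lies inside the final search range $[\frac{\delta}{(1+\epsilon)\beta\ell},\,\delta]$ of \cref{alg:streaming_sieve_guess} (for $\beta\ge 1$). Since consecutive grid points differ by a factor $(1+\epsilon)$, there is some $\tau_l \in \Tau^n$ with $\frac{\Opt}{(1+\epsilon)\beta\ell} \le \tau_l \le \frac{\Opt}{\beta\ell}$. I would then check that this $\tau_l$ is never deleted: deletion requires $\delta^t > (1+\epsilon)\beta\ell\,\tau_l$, but $(1+\epsilon)\beta\ell\,\tau_l \ge \Opt \ge \delta \ge \delta^t$ for every $t$, so the condition never fires.

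The main obstacle, and the heart of the argument, is to show that the lazily-maintained instance for $\tau_l$ produces exactly the sets it would have produced had it run from the very first element. Here \cref{lemma:upperbound-gain} is essential: any threshold $\tau$ that first enters the grid at time $t+1$ satisfies $\tau > \delta^t$, and \cref{lemma:upperbound-gain} guarantees that every earlier element $u^{t'}$ with $t'\le t$ has average gain $\frac{1}{m}\sum_i \nabla_i(u^{t'},\cdot) \le \delta^{t'} \le \delta^t < \tau$ independently of the contents of the $T_i$ sets (the bound factors through $f_i(u^{t'}\mid \cdot)\le f_i(u^{t'})$), so \AlgExchange would have rejected all of them. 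Hence no information is lost, and the $\tau_l$-instance is identical to a from-scratch run of \AlgStreamOpt with threshold $\tau_l$, to which I can apply the two cases of \cref{theorem:streaming}. If $|S_{\tau_l}| = \ell$, each accepted element contributed at least $\tau_l$, giving $\frac{1}{m}\sum_i f_i(T_{\tau_l,i}) \ge \ell\tau_l \ge \frac{\Opt}{(1+\epsilon)\beta}$; otherwise the $|S|<\ell$ branch applies, and since that branch uses the threshold only through the bound $\sum_{u^t\in S^{m,\ell}}\tau_l \le \ell\cdot\frac{\Opt}{\beta\ell}=\frac{\Opt}{\beta}$ (still valid because $\tau_l\le\frac{\Opt}{\beta\ell}$), it yields $\frac{1}{m}\sum_i f_i(T_{\tau_l,i}) \ge \frac{\alpha(\beta-1)}{\beta((\alpha+1)^2+\alpha)}\Opt$. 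As the algorithm returns the best instance, the claimed minimum follows.

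Finally I would dispatch the remaining two bullets. Substituting $\alpha=1$ and $\beta=\frac{6+\epsilon}{1+\epsilon}$, a direct computation using $\beta-1=\frac{5}{1+\epsilon}$ and $(\alpha+1)^2+\alpha=5$ shows both terms of the minimum collapse to $\frac{1}{6+\epsilon}$, which is $\frac17$ at $\epsilon=1$. For the resource bounds: the algorithm reads each element once; at any time the active range $[\frac{\delta^t}{(1+\epsilon)\beta\ell},\,\delta^t]$ spans a multiplicative factor $(1+\epsilon)\beta\ell$, so $|\Tau^t| = O(\frac{\log\ell}{\epsilon})$ for constant $\beta$, and since each surviving instance stores a set $S_\tau$ of at most $\ell$ elements (with every $T_{\tau,i}$ a subset of it) the total storage is $O(\frac{\ell\log\ell}{\epsilon})$. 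Per element we invoke \AlgExchange once per threshold, and each invocation evaluates $\nabla_i$ for all $m$ functions, where computing a best replacement costs $O(k)$ oracle calls, for an update time of $O(\frac{km\log\ell}{\epsilon})$.
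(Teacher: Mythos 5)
Your proposal is correct and follows essentially the same route as the paper: exhibit a surviving threshold $\tau_l$ with $\frac{\Opt}{(1+\epsilon)\beta\ell}\le\tau_l\le\frac{\Opt}{\beta\ell}$, use \cref{lemma:upperbound-gain} to argue that lazy instantiation loses nothing, and then rerun the two cases of \cref{theorem:streaming} with this threshold before dispatching the space and time bounds by counting thresholds. If anything, you are slightly more careful than the paper's one-line reduction, since you note explicitly that the $(1+\epsilon)$ loss only enters the $|S|=\ell$ branch (where $\tau_l$ is used as a lower bound) and not the $|S|<\ell$ branch (where it is used as an upper bound), which is exactly what the asymmetric minimum in the statement requires.
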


\begin{proof}
Note that there exists an instance of algorithm with a threshold $\tau$ in $\Tau^n$ such that $ \frac{\Opt}{1 + \epsilon} \leq \tau_l \leq \Opt$. 
For this instance, it suffices to replace $\Opt$ with $\frac{\Opt}{1 + \epsilon }$ in the proof of \cref{theorem:streaming}.
This proves the approximation guarantee of the theorem.
For each instance of the algorithm we keep at most $\ell$ items. 
Since we have $O(\frac{\log \ell}{\epsilon})$ thresholds, the total memory complexity of the algorithm is $O(\frac{ \ell \log \ell}{\epsilon})$. The update time per each element $u^t$ for each instance is $O(km)$. 
This is true because we compute the gain of exchanging $u^t$ with all the $k$ elements of $T_i^{t-1}$ for each function $f_i, 1\leq i \leq m$.
Therefore, the total update time per elements is $O(\frac{ k m \log \ell}{\epsilon})$.
\end{proof}

\section{Distributed Algorithm} \label{section:distributed}

In recent years, there have been several successful approaches to the problem of distributed submodular maximization
\citep{kumar13fast,mirzasoleiman2013distributed,mirrokni2015randomized,barbosa2015power}.
Specifically,
\citet{barbosa2015power} proved that the following simple procedure results in a distributed algorithm with a constant factor approximation guarantee: (i) randomly split the data amongst $\mathsf{M}$ machines, (ii) run the classical greedy on each machine and pass outputs to a central machine, (iii) run another instance of the greedy algorithm over the union of all the collected outputs from all $\mathsf{M}$ machines, and (iv) output the maximizing set amongst all the collected solutions. Although our objective function $G(S)$ is \textbf{not} submodular, we use a similar framework and still manage to prove that our algorithms achieve constant factor approximations to the optimal solution.

In \AlgDistributed (\cref{alg:distributed}), a central machine first randomly partitions data among $\mathsf{M}$ machines. Next, each machine runs \AlgRG \citep{stan17} on its assigned data. The outputs $S^l, \{T_i^l\}$ of all the machines are sent to the central machine, which runs another instance of \AlgRG over the union of all the received answers. Finally, the highest value set amongst all collected solutions is returned as the final answer. 
See \cref{section-replace-greedy} for a detailed explanation of \AlgRG.

\begin{algorithm}[htb!]
	\caption{\AlgDistributed}
	\label{alg:distributed}
	\begin{algorithmic}[1]
		\FOR{$e \in \Omega$}
		\STATE	Assign $e$ to a machine chosen uniformly at random
		\ENDFOR
		\STATE	Run $\AlgRG$ on each machine $l$ to obtain $S^l$ and $\{T^l_i\}$ for $1 \leq i \leq m$
		\STATE	$S, \{T_i\} \gets \argmax_{S^l, \{T_i^{l}\} } \frac{1}{m} \sum_{i=1}^{m} f_i(T^l_i)$
		\STATE	$S', \{T'_i\} \gets \AlgRG(\bigcup_l S^l)$
		\STATE {\bfseries Return:} $\argmax\{\frac{1}{m} \sum_{i=1}^{m} f_i(T_i), \frac{1}{m} \sum_{i=1}^{m} f_i(T'_i)\}$
	\end{algorithmic}
\end{algorithm}

\begin{theorem}\label{theorem:distributed}
	The \AlgDistributed algorithm outputs sets $S^*, \{ T^*_i\}\subset S$, with $|S^*| \leq \ell, |T^*_i| \leq k$, such that
		\[ 
		\E[\frac{1}{m} \sum_{i=1}^m f_i(T^*_i)] \geq \frac{\alpha}{2} \cdot \Opt,
		\]
		where $\alpha = \frac{1}{2} (1 - \frac{1}{\euler^2}).$ The time complexity of algorithm is $O(\nicefrac{km\ell n}{\mathsf{M}} + \mathsf{M} km\ell^2)$.
\end{theorem}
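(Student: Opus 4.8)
The plan is to adapt the randomized two-round distributed framework of \citet{barbosa2015power} to the two-stage objective $G_m$. The algorithm runs \AlgRG on each machine and then once more on the union of the machine outputs, finally returning the best solution it has seen; since \AlgRG is an $\alpha$-approximation for $G_m$ with $\alpha = \frac{1}{2}(1 - 1/\euler^2)$ (the guarantee of the centralized \AlgRG), it suffices to show that running this two-round procedure on a uniformly random partition recovers, in expectation, at least half of what a single centralized run would achieve. The whole argument then reduces to a dichotomy driven by the random partition.

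First I would set up the randomization. Because each element of $\Omega$ is assigned to a machine independently and uniformly, the input seen by any fixed machine is distributed exactly as a random subset $V_{1/\mathsf{M}}$ of $\Omega$ in which every element appears independently with probability $1/\mathsf{M}$. Fix the centralized optimum $O = S^{m,\ell}$ and recall $\Opt = \frac{1}{m}\sum_i f_i(S_i^{m,\ell})$. For each optimal element $o \in O$, I would track whether $o$ \emph{survives} the first round, i.e.\ whether $o \in \AlgRG(V_{1/\mathsf{M}} + o)$ when $o$ is placed on a machine whose remaining input is a fresh $1/\mathsf{M}$-random subset; conditioning on the machine that $o$ lands on, the other elements are still included independently with probability $1/\mathsf{M}$, so this survival probability is well defined. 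The point of this bookkeeping is that every surviving optimal element is present in the union $\bigcup_l S^l$ that the central machine processes in the second round.

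Then I would run the dichotomy as a balancing argument. Writing $\{T_i^{(1)}\}$ for the best single-machine solution and $\{T_i'\}$ for the second-round solution, the goal is to prove $\E[\frac{1}{m}\sum_i f_i(T_i^{(1)})] + \E[\frac{1}{m}\sum_i f_i(T_i')] \geq \alpha \cdot \Opt$, from which the stated $\frac{\alpha}{2}\cdot\Opt$ bound follows immediately because \AlgDistributed returns the larger of the two candidates. For the union term, every optimal element that survives the first round lies in $\bigcup_l S^l$, so the $\alpha$-guarantee of the second-round \AlgRG lower bounds $\frac{1}{m}\sum_i f_i(T_i')$ by $\alpha$ times the value carried by the surviving optimal elements. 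For the single-machine term, an optimal element fails to survive only when the replacement step of \AlgRG found the incumbent per-machine solution already competitive, so the rejected elements' value is charged against the per-machine solutions. Summing the two contributions accounts for the entire value of $O$ up to the factor $\alpha$, and taking expectations over the random partition completes the bound; the cardinality constraints $|S^*| \le \ell$ and $|T_i^*| \le k$ are inherited directly from \AlgRG.

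The main obstacle is that $G_m$ is \textbf{not} submodular, so the analysis of \citet{barbosa2015power} cannot be invoked as a black box: the standard argument uses submodularity both to guarantee that appending the rejected optimal elements to a machine's input does not alter the greedy run (a consistency property) and to lower bound the value retained by the union in terms of $\Opt$. I would therefore need a dedicated lemma showing that the replacement structure of \AlgRG, together with the monotone submodularity of each individual $f_i$, supplies exactly the consistency and value-retention properties that submodularity of $G_m$ would otherwise provide; this is where the real difficulty lies. The runtime claim is by contrast routine: a machine runs \AlgRG on $n/\mathsf{M}$ elements in time $O(km\ell n/\mathsf{M})$, the union has at most $\mathsf{M}\ell$ elements giving a second-round cost of $O(\mathsf{M} km\ell^2)$, and summing yields $O(\nicefrac{km\ell n}{\mathsf{M}} + \mathsf{M} km\ell^2)$.
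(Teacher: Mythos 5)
Your skeleton matches the paper's proof: both follow the two-round framework of \citet{barbosa2015power}, define survival probabilities $\bm{p}_e = \Prob_{A \sim \mathcal{V}(1/\mathsf{M})}[e \in \AlgRG(A \cup \{e\})]$ for the optimal elements, bound the second-round solution via the survivors in $\bigcup_l S^l$ and the single-machine solutions via the rejected elements, and finish by averaging the two candidates. You also correctly isolate one of the two genuinely new ingredients, namely a consistency lemma showing that $\AlgRG(V^l \cup O^l) = \AlgRG(V^l)$ when every rejected element individually leaves the output unchanged (the paper's \cref{lemma:greedy}), together with the observation that $O^l$ with its per-function restrictions $O^l_i = O^l \cap S^{m,\ell}_i$ is itself a feasible two-stage solution on $V^l \cup O^l$, so the $\alpha$-guarantee of \AlgRG charges $f_i(O^l_i)$ against $f_i(T^l_i)$.

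The genuine gap is in the step you dispose of with ``summing the two contributions accounts for the entire value of $O$ up to the factor $\alpha$.'' The two lower bounds involve the random sets $\bigcup_l S^l \cap S^{m,\ell}_i$ (survival judged on each element's \emph{own} machine) and $O^l_i$ (rejection judged relative to one \emph{fixed} machine $l$, including elements not assigned to $l$). These are complementary only in expectation, $\E[\bm{1}_{O^l_i}] = \bm{1}_{S^{m,\ell}_i} - \bm{p_i}$, not in any single realization, so pointwise subadditivity of $f_i$ cannot combine them. The paper closes this with Lemma~1 of \citet{barbosa2015power}: if $\E[\bm{1}_A] = \lambda\bm{p}$ then $\E[f(A)] \geq \lambda f^{-}(\bm{p})$ for the Lov\'asz extension $f^{-}$, which yields $\E[\frac{1}{m}\sum_i f_i(T'_i)] \geq \frac{\alpha}{m}\sum_i f_i^{-}(\bm{p_i})$ and $\E[\frac{1}{m}\sum_i f_i(T^l_i)] \geq \frac{\alpha}{m}\sum_i f_i^{-}(\bm{1}_{S^{m,\ell}_i} - \bm{p_i})$, and then convexity of $f_i^{-}$ gives $\frac{1}{2}[f_i^{-}(\bm{p_i}) + f_i^{-}(\bm{1}_{S^{m,\ell}_i} - \bm{p_i})] \geq \frac{1}{2} f_i(S^{m,\ell}_i)$. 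Without this (or an equivalent expectation-level argument) the balancing claim does not follow. Relatedly, you locate the difficulty in the non-submodularity of $G_m$ blocking the value-retention bound; in fact the paper applies the Lov\'asz machinery to each individual $f_i$, which \emph{is} submodular, so that part of \citet{barbosa2015power} transfers directly --- the non-submodularity of $G_m$ only forces the bespoke consistency lemma for \AlgRG and the feasibility argument for $(O^l, \{O^l_i\})$. Your runtime accounting is correct and matches the paper.
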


\begin{proof}
First recall that we defined:
\begin{align*}
S^{m, \ell} = \argmax_{S \subseteq \Omega, |S| \leq \ell} \frac{1}{m} \sum_{i=1}^{m} \max_{|T| \leq k, T\subseteq S} f_i(T),
\end{align*}
and 
\begin{align*}
S^{m,\ell}_{i} = \argmax_{S \subseteq S^{m,\ell} , |S| \leq k} f_i(S) \text{ and } \Opt = \frac{1}{m}\sum_{i=1}^{m} f_i(S_i^{m,\ell}).
\end{align*}
Let $\mathcal{V}(\nicefrac{1}{\mathsf{M}})$ denote the distribution over random subsets of $\Omega$ where each element is picked independently with a probability $\frac{1}{\mathsf{M}}$.
Define vector $\bm{p} \in [0,1]^n$ such that for $e \in \Omega$, we have \[ \bm{p}_e=\left\{
\begin{array}{ll}
\Prob_{A \sim \mathcal{V}(1/\mathsf{M})} [e \in \AlgRG(A \cup \{e\})]\  \text{if } e \in S^{m,\ell},  \\
0 \qquad \text{otherwise.}
\end{array}
\right.
\]
We also define vector $\bm{p_i}$ such that for $e \in V,$ we have:
\[ {\bm{p_i}}_e=\left\{
\begin{array}{ll}
\bm{p}_e \  \text{if } e \in S^{m,\ell}_i,  \\
0 \qquad \text{otherwise.}
\end{array}
\right.
\]
Denote by $V^l$ the set of elements assigned to machine $l$.
Also, let $O^{l} = \{ e \in S^{m, \ell}: e \notin \AlgRG(V^l \cup \{e\}) \}$. 
Furthermore, define $O^{l}_i = O^{l} \cap S^{m,\ell}_i.$
The next lemma plays a crucial role in proving the approximation guarantee of our algorithm. 
\begin{lemma} \label{lemma:greedy}
	Let $A \subseteq \Omega$ and $B \subseteq \Omega$ be two disjoint subsets of $\Omega$. Suppose for each element $e \in B$, we have $\AlgRG(A \cup \{e\}) = \AlgRG(A)$. Then we have:
	\[\AlgRG(A \cup B) = \AlgRG(A).\]
\end{lemma}
\begin{proof}
	We proof lemma by contradiction. Assume
	\[\AlgRG(A \cup B) \neq \AlgRG(A).\] 
	At each iteration the element with the highest additive value is added to set $S$. 
	In \AlgRG, the additive value of each element depends on sets $T_i \subseteq S$.
	Note that sets $T_i \subseteq S$ are deterministic functions of elements of $S$ while considering their order of additions to $S$.
	Let's assume $e$ is the first element such that $\AlgRG(A \cup B) \neq \AlgRG(A)$. First note that $e \notin A$. Also, we conclude 
	\[\AlgRG(A \cup \{e\}) \neq \AlgRG(A).\] 
	This contradicts with the assumption of lemma.
\end{proof}

From the definition of set $O^l$ and \cref{lemma:greedy},  we have:
\[\AlgRG(V^l) = \AlgRG(V^l \cup O^l).\]
\begin{lemma}
	We have:
	\[\frac{1}{m} \sum_{i=1}^m f_i(T^{l}_{i}) \geq \alpha \cdot \frac{1}{m} \sum_{i=1}^m f_i(O^l_i), \]
	where $\alpha$ is the approximation factor of \AlgRG.
\end{lemma}
\begin{proof}
	Let $\Opt_i^l$ denote the optimum value for function $f_i$ on the dataset $V^l \cup O^l$ for the two-stage submodular maximization problem. We have:
	\[\frac{1}{m} \sum_{i=1}^m f_i(T^{l}_{i})\geq \alpha \cdot \frac{1}{m} \sum_{i=1}^m \Opt_i^l \geq \alpha \cdot \frac{1}{m} \sum_{i=1}^m f_i(O^l_i). \]
\end{proof}
This is true because (i) $\AlgRG(V^l) = \AlgRG(V^l \cup O^l)$, (ii) approximation guarantee of $\AlgRG$ is $\alpha$, and
(iii) $O^l$ and $\{O^l_i\}$ is a valid solution for the two-stage submodular maximization problem over set $V^l \cup O^l$. 
Assume $f_i^{-}$ is the Lov\'asz extension of a submodular function $f_i$.
\begin{lemma}[Lemma~1, \citet{barbosa2015power}]
	Let $A$ be random set, and suppose that $\E[\bm{1}_A] = \lambda \cdot \bm{p}$ for a constant value of $\lambda \in [0,1]$. Then,  $\E[f(S)] \geq \lambda \cdot f^{-}(\bm{p})$.
\end{lemma}

For each element $e \in S^{m, \ell}$ we have: 
\begin{align*}
\Prob[e \in O^l] & = 1 - \Prob[e \notin O^l] = 1 - \bm{p}_e,\\
\E[\bm{1}_{O^l}] & = \bm{1}_{S^{m,\ell}} - \bm{p},  \\
\E[\bm{1}_{O^l_i}] & = \bm{1}_{S^{m,\ell}_i} - \bm{p_i}.
\end{align*}
Therefore, we have:
\[ \E[ \frac{1}{m} \sum_{i=1}^m f_i(T^{l}_{i})] \geq \alpha \cdot  \E[ \frac{1}{m} \sum_{i=1}^m f_i(O^l_i) ] \geq \frac{\alpha}{m} \cdot \sum_{i=1}^{m} f^{-}_i (\bm{1}_{S^{m,\ell}_i} - \bm{p_i}).\]
Furthermore, for each element $e \in S^{m,\ell}$ we have
\begin{align*}
\Prob[e \in \bigcup_l S^l | e \text{ is assigned to machine } l] & = \Prob[e \in \AlgRG(V^l) | e \in V^l] \\
& = \Prob_{A \sim \mathcal{V}(1/\mathsf{M})} [e \in \AlgRG(A) | e \in A] \\
& = \Prob_{B \sim \mathcal{V}(1/\mathsf{M})} [e \in \AlgRG(B \cup \{e\})] \\
& = \bm{p}_e.
\end{align*}
Therefore, we have
\begin{align*}
\E[\frac{1}{m} \sum_{i=1}^{m} f_j(T'_i)] \geq \alpha \cdot  \E[\frac{1}{m} \sum_{i=1}^{m} f_i(\bigcup_l S^l \cap S^{m,\ell}_i) ] \geq \frac{\alpha}{m} \cdot \sum_{i=1}^{m} f^{-}_i(\bm{p_i})
\end{align*}
To Sum up above, we have:
\begin{align}
\E[ \frac{1}{m} \sum_{i=1}^{m} f_j(T^*_i)] & \geq \frac{\alpha}{m} \sum_{i=1}^{m} f^{-}_j (\bm{1}_{S^{m,\ell}_i} - \bm{p_i}), \\
\E[ \frac{1}{m} \sum_{i=1}^{m} f_i(T^*_i)] & \geq  \frac{\alpha}{m} \sum_{i=1}^{m} f^{-}_i(\bm{p_i}).
\end{align}
And therefore we have:
\begin{align*}
\E[ \frac{1}{m} \sum_{i=1}^{m} f_i(T^*_i)] & \geq 
 \frac{\alpha}{2m} \sum_{i=1}^{m} \left[ f^{-}_i(\bm{p_i}) + f^{-}_i (\bm{1}_{S^{m,\ell}_i} - \bm{p_i})  \right] \\
& \overset{(a)}{\geq} \frac{\alpha}{2m} \sum_{i=1}^{m}  f^{-}_i (\bm{1}_{S^{m,\ell}_i})  
\geq \frac{\alpha}{2m} \sum_{i=1}^m f_i(S^{m,\ell}_i).
\end{align*}
The inequality $(a)$ results from the convexity of Lov\'asz extensions for submodular functions. Note that the approximation guarantee of $\AlgRG$ is $\alpha =  \frac{1}{2} (1 - \frac{1}{\euler^2})$ \citep{stan17}. 
This proves \cref{theorem:distributed}.
\end{proof}

Unfortunately, for very large datasets, the time complexity of \AlgRG could be still prohibitive.
For this reason, we can use a modified version of \AlgStream (called \AlgStreamModified) to design an even more scalable distributed algorithm.
This algorithm receives all elements in a centralized way, but it uses a predefined order to generate a (pseudo) stream before processing the data.
This consistent ordering is used to ensure that the output of \AlgStreamModified is independent of the random ordering of the elements.
The only other difference between \AlgStreamModified and \AlgStream is that 
it outputs all sets $S_\tau, \{T_{\tau,i}\}$ for all $\tau \in \Tau^n$ (instead of just the maximum). 
We use this modified algorithm as one of the main building blocks for \AlgDistributedFast (outlined in \cref{alg:distributed-fast}).

\begin{algorithm}[htb!]
	\caption{\AlgDistributedFast}
	\label{alg:distributed-fast}
	\begin{algorithmic}[1]
		\STATE For $1 \leq l \leq \mathsf{M}$ set $V^l = \emptyset$
		\FOR{$e \in \Omega$}
		\STATE	Assign $e$ to a set $V^l$ chosen uniformly at random
		\ENDFOR
		\STATE For $1 \leq l \leq \mathsf{M}$ sort elements of $V^l$ based on a universal predefined ordering between elements \hspace{0.1in} \COMMENT{Any consistent ordering between elements of $\Omega$ is valid.}
		\STATE	Let $V^l$ be the elements assigned to machine $l$\;
		\STATE	Run \AlgStreamModified on each machine $l$ to obtain $\{S^l_\tau\}$ and $\{T^l_{\tau,i}\}$ for $1 \leq i \leq m$ and relevant values of $\tau$ on that machine
		\STATE	$l^*, \tau^* \gets \argmax_{l, \tau } \frac{1}{m} \sum_{i=1}^{m} f_i(T^l_{\tau,i})$
		\STATE	$S, \{T_i\} \gets \AlgRG(\bigcup_l \bigcup_\tau S^l_\tau)$
		\STATE {\bfseries Return:} $\argmax\{\frac{1}{m} \sum_{i=1}^{m} f_i(T_i), \frac{1}{m} \sum_{i=1}^{m} f_i(T^{l^*}_{{\tau^*}i})\}$
	\end{algorithmic}
\end{algorithm} 

\begin{theorem}\label{theorem:distributed-fast}
	The \AlgDistributedFast algorithm outputs sets $S^*, \{ T^*_i\} \subset S$, with $|S^*| \leq \ell, |T^*_i| \leq k$, such that
	\[ 
	\E[\frac{1}{m} \sum_{i=1}^m f_i(T^*_i)] \geq \frac{\alpha \cdot \gamma}{\alpha + \gamma} \cdot \Opt,
	\] 
	where $\alpha = \frac{1}{2} (1 - \frac{1}{\euler^2})$ and $\gamma = \frac{1}{6 + \epsilon}$. 
	The time complexity of algorithm is $O(\nicefrac{kmn \log \ell}{\mathsf{M}} + \mathsf{M} km\ell^2\log \ell )$.
\end{theorem}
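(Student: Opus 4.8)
The plan is to follow the template of the proof of \cref{theorem:distributed} almost verbatim, with one substitution: the first (per-machine) stage now runs \AlgStreamModified, whose guarantee factor is $\gamma = \frac{1}{6+\epsilon}$ from \cref{theorem:streaming_guess}, while the second (centralized) stage still runs \AlgRG with factor $\alpha = \frac12(1-\frac{1}{\euler^2})$. Let $\mathcal{V}(1/\mathsf{M})$ be the distribution over subsets of $\Omega$ including each element independently with probability $1/\mathsf{M}$, and for $e \in S^{m,\ell}$ set $\bm{p}_e = \Prob_{A\sim\mathcal{V}(1/\mathsf{M})}[e \in \AlgStreamModified(A\cup\{e\})]$ (and $\bm{p}_e = 0$ otherwise), where $\AlgStreamModified(\cdot)$ denotes the union $\bigcup_\tau S_\tau$ of its outputs; define $\bm{p_i}$ by zeroing the coordinates outside $S^{m,\ell}_i$. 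Writing $V^l$ for the elements sent to machine $l$, I would put $O^l = \{e\in S^{m,\ell}: e\notin \AlgStreamModified(V^l\cup\{e\})\}$ and $O^l_i = O^l \cap S^{m,\ell}_i$. Exactly as in \cref{theorem:distributed}, one then has $\E[\bm{1}_{O^l}] = \bm{1}_{S^{m,\ell}} - \bm{p}$, $\E[\bm{1}_{O^l_i}] = \bm{1}_{S^{m,\ell}_i} - \bm{p_i}$, and each $e\in S^{m,\ell}$ lands in $\bigcup_l\bigcup_\tau S^l_\tau$ with probability exactly $\bm{p}_e$.

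The one genuinely new ingredient, and the step I expect to be the main obstacle, is the streaming analogue of \cref{lemma:greedy}: for disjoint $A,B$ with $\AlgStreamModified(A\cup\{e\}) = \AlgStreamModified(A)$ for every $e\in B$, one must show $\AlgStreamModified(A\cup B) = \AlgStreamModified(A)$. This is precisely why \AlgStreamModified processes elements in a fixed universal order rather than the random arrival order: it makes the algorithm a deterministic function of its input set, so that a first-divergence argument (considering the earliest element of $B$, in the universal order, at which the two executions differ) applies. The delicate point, absent in the greedy case, is that inserting elements of $B$ can in principle raise the running estimate $\delta^t$ and hence shift the active threshold set $\Tau^t$; I would argue that an element which is never selected (guaranteed by $e\notin\AlgStreamModified(A\cup\{e\})$) leaves every relevant $S_\tau$ and $T_{\tau,i}$ trajectory untouched, so processing the whole batch reproduces the run on $A$ verbatim. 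Granting this, $\AlgStreamModified(V^l) = \AlgStreamModified(V^l\cup O^l)$, so the best threshold on machine $l$ is identical to what it would be had we run on $V^l\cup O^l$; since $(O^l,\{O^l_i\})$ is a feasible two-stage solution on $V^l\cup O^l$, \cref{theorem:streaming_guess} gives $\frac1m\sum_i f_i(T^{l}_{\tau,i}) \geq \gamma\cdot\frac1m\sum_i f_i(O^l_i)$ for the maximizing $\tau$.

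From here the two halves are routine. Applying Lemma~1 of \citet{barbosa2015power} to $O^l_i$ and to $\big(\bigcup_l\bigcup_\tau S^l_\tau\big)\cap S^{m,\ell}_i$, and then taking the better of the single-machine and union solutions, yields
\begin{align*}
\E\Big[\tfrac1m\sum_{i=1}^m f_i(T^*_i)\Big] &\geq \tfrac{\gamma}{m}\sum_{i=1}^m f^{-}_i(\bm{1}_{S^{m,\ell}_i} - \bm{p_i}), \\
\E\Big[\tfrac1m\sum_{i=1}^m f_i(T^*_i)\Big] &\geq \tfrac{\alpha}{m}\sum_{i=1}^m f^{-}_i(\bm{p_i}),
\end{align*}
the first from the streaming stage (factor $\gamma$) and the second from running \AlgRG over the union (factor $\alpha$). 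I would then combine them with weights $\frac{\alpha}{\alpha+\gamma}$ and $\frac{\gamma}{\alpha+\gamma}$, chosen so that both Lov\'asz-extension terms acquire the same coefficient $\frac{\alpha\gamma}{\alpha+\gamma}$, and invoke subadditivity of the Lov\'asz extension (its convexity together with positive homogeneity, exactly as in step $(a)$ of \cref{theorem:distributed}), namely $f^{-}_i(\bm{p_i}) + f^{-}_i(\bm{1}_{S^{m,\ell}_i}-\bm{p_i}) \geq f^{-}_i(\bm{1}_{S^{m,\ell}_i}) = f_i(S^{m,\ell}_i)$, to conclude $\E[\frac1m\sum_i f_i(T^*_i)] \geq \frac{\alpha\gamma}{\alpha+\gamma}\Opt$.

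Finally, the time bound should follow by bookkeeping: the per-machine streaming stage charges the $O(km\log\ell)$ per-element cost of \cref{theorem:streaming_guess} to the $O(n/\mathsf{M})$ elements on each machine, giving $O(\nicefrac{kmn\log\ell}{\mathsf{M}})$; and since each machine emits $O(\ell\log\ell)$ elements, \AlgRG runs over the $O(\mathsf{M}\ell\log\ell)$ collected elements, contributing the $O(\mathsf{M}km\ell^2\log\ell)$ term.
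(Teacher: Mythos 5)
Your proposal is correct and follows essentially the same route as the paper: the paper's proof consists precisely of establishing the streaming analogue of \cref{lemma:greedy} (via the universal ordering and the observation that any newly instantiated threshold would immediately pick the element that created it, so an unselected element cannot leave surviving thresholds behind) and then repeating the proof of \cref{theorem:distributed} with $\gamma$ in place of $\alpha$ for the first stage. Your identification of that lemma as the one genuinely new ingredient, your handling of the shifting threshold set, and your weighted combination yielding $\frac{\alpha\gamma}{\alpha+\gamma}$ all match the paper's argument.
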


The following lemma provides the equivalent of \cref{lemma:greedy} for \AlgStreamModified.
The rest of proof is exactly the same as the proof of \cref{theorem:distributed} with the only difference  that the approximation guarantee of \AlgStreamModified is $\gamma = \frac{1}{6 + \epsilon}$.

\begin{lemma} \label{lemma:greedy-fast}
	Let $A \subseteq \Omega$ and $B \subseteq \Omega$ be two disjoint subsets of $\Omega$. Suppose for each element $e \in B$, we have 
	\[\AlgStreamModified(A \cup \{e\}) = \AlgStreamModified(A).\] 
	Then we have:
	\[\AlgStreamModified(A \cup B) = \AlgStreamModified(A).\]
\end{lemma}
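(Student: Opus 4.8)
The plan is to mirror the contradiction argument used for \cref{lemma:greedy}, exploiting the one feature that distinguishes \AlgStreamModified from \AlgStream: it fixes a single universal ordering of the elements before streaming. Consequently the entire execution---every accept/reject decision of \AlgExchange, every update of $\delta^t$, and every update of the active threshold set $\Tau^t$---is a deterministic function of the underlying element set and nothing else; the dependence on a random arrival order that would otherwise block a lemma of this form is removed by design. This is exactly the determinism that let \citet{barbosa2015power} invoke \cref{lemma:greedy} for \AlgRG, and I would reuse the same skeleton here.

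First I would run \AlgStreamModified on $A$ and on $A\cup B$ in parallel, aligning the two executions on the common $A$-elements, which by the universal ordering appear in the same relative order in both streams. I would say the two runs \emph{agree up to} a given point if their full states $(\delta^t,\Tau^t,\{S_\tau\},\{T_{\tau,i}\})$ coincide there. Both start in the empty state, and whenever an $A$-element is processed from agreeing states the \AlgExchange decision and the $\delta,\Tau$ updates produce identical results, so agreement is preserved across every $A$-element. Hence if the final outputs differ, the first step that breaks agreement must process some $e\in B$. At that step the state just before $e$ is common to both runs; but this common state is also precisely the state reached by running \AlgStreamModified on $A\cup\{e\}$ up to $e$, since that run sees only the $A$-elements preceding $e$, in the same order. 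Therefore $e$ induces the identical state change in the $A\cup\{e\}$ run, giving $\AlgStreamModified(A\cup\{e\})\neq\AlgStreamModified(A)$, which contradicts the hypothesis.

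The step I expect to be the main obstacle is making precise that a state change caused by $e$ actually \emph{persists} to the final output---this is where the streaming setting is genuinely harder than the greedy setting of \cref{lemma:greedy}, whose solution set only grows. I would handle the moving parts of the state separately. If $e$ is accepted into some $S_\tau$, then since elements are never evicted from $S_\tau$ (the inner sets $T_{\tau,i}$ are preempted only on steps that simultaneously enlarge $S_\tau$), that $S_\tau$ differs permanently and the outputs differ. If instead $e$ is rejected by every active threshold, the only remaining way it can perturb the state is by raising the running maximum $\delta^t$ and thereby altering $\Tau^t$. To rule this out as a spurious, non-output-affecting change, I would invoke the threshold-guessing guarantee behind \cref{lemma:upperbound-gain}: a threshold that becomes active late would not have selected any earlier element, so the contents of each surviving $S_\tau$---and hence the reported output $(S_\tau,\{T_{\tau,i}\})$ over $\tau\in\Tau^n$---depend only on the final estimate $\delta^n$ and not on the instants at which thresholds were instantiated. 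Thus a change that moves $\delta^n$ (equivalently $\Tau^n$) already changes the output, while one that leaves $\delta^n$ fixed leaves the output untouched and so is not a true divergence. With this persistence fact established, the first-divergence reduction above closes exactly as in \cref{lemma:greedy}.
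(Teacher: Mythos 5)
Your overall strategy --- determinism from the universal ordering, alignment of the runs on $A$ and $A\cup B$, and a first-divergence contradiction transferred to the run on $A\cup\{e\}$ --- is the same as the paper's. The genuine problem is in the persistence step, exactly the place you flagged as the main obstacle, and your two-case resolution does not close it. Your Case 1 claims that if $e$ is accepted into some $S_\tau$ then the outputs differ; this is false, because \AlgStreamModified deletes the entire instance $(S_\tau,\{T_{\tau,i}\})$ whenever $\tau$ drops out of $\Tau^t$ as $\delta^t$ grows, so an element accepted only into instances that are later deleted leaves the output unchanged. Your Case 2 is vacuous: an element that raises $\delta^t$ cannot be ``rejected by every active threshold,'' because each newly instantiated $\tau\le\delta^t$ starts with empty sets and the element's gain with respect to them is $\frac{1}{m}\sum_{i=1}^m\nabla_i(u^t,\emptyset)=\frac{1}{m}\sum_{i=1}^m f_i(u^t)=\delta^t\ge\tau$, so the element is automatically accepted by every instance it creates. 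The case that falls through the cracks is precisely ``$e$ is accepted only into thresholds that are later deleted'': there $\AlgStreamModified(A\cup\{e\})=\AlgStreamModified(A)$ can hold even though the full state diverges at $e$, so no contradiction is obtained; and once you concede that such divergences are ``not true divergences,'' the first-full-state-divergence induction no longer has an aligned prefix from which to continue.

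The paper closes this by making the accepted-by-new-thresholds observation above the centerpiece: under the hypothesis it implies that no $e\in B$ can instantiate a threshold that survives to $\Tau^n$ (otherwise $e\in S_\tau^n$ for that $\tau$, putting $e$ in the output of $\AlgStreamModified(A\cup\{e\})$, which must contain only elements of $A$). Consequently every instance reported at the end is created by an $A$-element, and the divergence can be defined directly as the first element of $B$ accepted by an instance whose threshold lies in $\Tau^n$; for that element the history of the surviving instance up to its arrival is built from $A$-elements only, matches the run on $A\cup\{e\}$, and the contradiction goes through. If you restate your induction invariant in terms of this output-relevant state (the surviving instances together with $\delta^n$) rather than the full state, and add the accepted-by-new-thresholds fact, your argument becomes the paper's.
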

\begin{proof}	
	First note that because of the universal predefined ordering between elements of $\Omega$, the order of processing the elements would not change in different runs of \AlgStreamModified.
	Also, in the streaming setting, if an element $u^t$ changes the set of thresholds $\Tau^t$, then $u^t$ would be picked by those newly instantiated thresholds. 
	To show this, assume $\delta_{t-1} < \tau \leq \delta_t$ is one of the newly instantiated thresholds.
	For $\tau$, the sets $\{T_{\tau,i}\}$ are empty and we have:
	\[\tau \leq \sum_{i=1}^{m} \nabla_i(u^t| \emptyset) = \sum_{i=1}^{m} f_i(u^t) = \delta_t.\]
	Therefore, $u^t$ is added to all sets $\{T_{\tau,i} \}$.
	For an element $e \in B$, we have two cases: (i) $e$ has not changed the thresholds when it is arrived, or (ii) it has instantiated new thresholds (e.g., a new threshold $\tau$) but non of them is in the final thresholds  $\Tau^{n}$; because if $\tau \in \Tau^n$, then we have $e \in S_{\tau}^n$, and this contradicts with the definition of set $B$.
	
	Now consider $\AlgStreamModified(A \cup B)$. 
	We prove the lemma by contradiction. Assume
	\[ \AlgStreamModified(A \cup B) \neq \AlgStreamModified(A). \]
	Assume $e$ is the first element of $B$ which is picked by $\AlgStreamModified(A \cup B)$ for a threshold in $\Tau^n$.
	From the above, we know that non of the thresholds $\Tau^n$ of this running instance of the algorithm is instantiated when an element of $B$ is arrived. So, when $e$ is arrived, all the thresholds of $\Tau^n$ which are instantiated so far are from elements of $A$. 
	Also, since the order of processing of elements are fixed, $\AlgStreamModified(A \cup B)$ and $\AlgStreamModified(A \cup \{e\})$ would pick the same set of element till the point $e$ is arrived. If $e$ is picked by $\AlgStreamModified(A \cup B)$ for a threshold $\tau \in \Tau^n$, then $\AlgStreamModified(A \cup \{e\})$ would also pick $e$ for that threshold. This contradicts with the definition of set $B$.
\end{proof}

From \cref{theorem:distributed,theorem:distributed-fast}, we conclude that the optimum number of machines $\mathsf{M}$ for \AlgDistributed and \AlgDistributedFast is $O(\sqrt{\nicefrac{n}{\ell}})$ and $O(\nicefrac{\sqrt{n}}{\ell})$, respectively. 
Therefore, \AlgDistributedFast is a factor of $O(\nicefrac{\sqrt{n}}{\log \ell})$ and $O(\nicefrac{\sqrt{\ell}}{\log \ell})$ faster than \AlgRG and \AlgDistributed, respectively.
\vspace{-10pt}

\section{Applications} \label{section:apps}
In this section, we evaluate the performance of our algorithms in both the streaming and distributed settings.
We compare our work against several different baselines.
\subsection{Streaming Image Summarization} \label{subsection:imageStream}
In this experiment, we will use a subset of the \textit{VOC2012} dataset~\citep{voc2012}. This dataset has images containing objects from 20 different classes, ranging from birds to boats. For the purposes of this application, we will use $n = 756$ different images and we will consider all $m = 20$ classes that are available. 
Our goal is to choose a small subset $S$ of images that provides a good summary of the entire ground set $\Omega$. 
In general, it can be difficult to even define what a good summary of a set of images should look like. Fortunately, each image in this dataset comes with a human-labelled annotation that lists the number of objects from each class that appear in that image. 

Using the exemplar-based clustering approach \citep{mirzasoleiman2013distributed}, for each image we generate an $m$-dimensional vector $x$ such that $x_i$ represents the number of objects from class $i$ that appear in the image (an example is given in \cref{appendix:additional}).  
We define $\Omega_i$ to be the set of all images that contain objects from class $i$, and correspondingly $S_i = \Omega_i \cap S$ (i.e. the images we have selected that contain objects from class $i$).

We want to optimize the following monotone submodular functions: 
\[f_i(S) = L_i(\{e_0\}) - L_i(S \cup \{e_0\}),\]
where \[ L_i(S) = \frac{1}{| \Omega_i |} \sum_{x \in \Omega_i} \min_{y \in S_i} d(x,y).\]

We use $d(x,y)$ to denote the ``distance" between two images $x$ and $y$. More accurately, we measure the distance between two images as the $\ell_2$ norm between their characteristic vectors. We also use $e_0$ to denote some auxiliary element, which in our case is the all-zero vector. 

Since image data is generally quite storage-intensive, streaming algorithms can be particularly desirable. With this in mind, we will compare our streaming algorithm \textsc{Replacement-Streaming} against the non-streaming baseline of \textsc{Replacement-Greedy}. We also compare against a heuristic streaming baseline that we call \textsc{Stream-Sum}. This baseline first greedily optimizes the submodular function $F(S) = \sum^m_{i=1} f_i(S)$ using the streaming algorithm developed by \citet{buchbinder2015online}. Having selected $\ell$ elements from the stream, it then constructs each $T_i$ by greedily selecting $k$ of these elements for each $f_i$.

To evaluate the various algorithms, we consider two primary metrics: the objective value, which we define as $\sum^m_{i=1} f_i(T_i)$, and the wall-clock running time. We note that the trials were run using Python 2.7 on a quad-core Linux machine with 3.3 GHz Intel Core i5 processors and 8 GB of RAM. Figure \ref{streamGraphs} shows our results.

The graphs are organized so that each column shows the effects of varying a particular parameter, with the objective value being shown in the top row and the running time in the bottom row. 
The primary observation across all the graphs is that our streaming algorithm \textsc{Replacement-Streaming} not only achieves an objective value that is similar to that of the non-streaming baseline \textsc{Replacement-Greedy}, but it also speeds up the running time by a full order of magnitude. We also see that \textsc{Replacement-Streaming} outperforms the streaming baseline \textsc{Stream-Sum} in both objective value and running time.
 
 Another noteworthy observation from Figure~\ref{streamGraphs}(c) is that $\epsilon$ can be increased all the way up to $\epsilon = 0.5$ before we start to see loss in the objective value. Recall that $\epsilon$ is the parameter that trades off the accuracy of  \textsc{Replacement-Streaming} with the running time by changing the granularity of our guesses for $\Opt$. As seen Figure \ref{streamGraphs}(f), increasing $\epsilon$ up to 0.5 also covers the majority of running time speed-up, with diminishing returns kicking in as we get close to $\epsilon = 1$.

 Also in the context of running time, we see in Figure \ref{streamGraphs}(e) that \textsc{Replacement-Streaming} actually speeds up as $k$ increases. This seems counter-intuitive at first glance, but one possible reason is that the majority of the time cost for these replacement-based algorithms comes from the swapping that must be done when the $T_i$'s fill up. Therefore, the longer each $T_i$ is not completely full, the faster the overall algorithm will run. 

 \begin{figure*}[htb!]
	\centering
	\subfloat[$k = 5$, $\epsilon = 0.5$]{\includegraphics[height=1.33in]{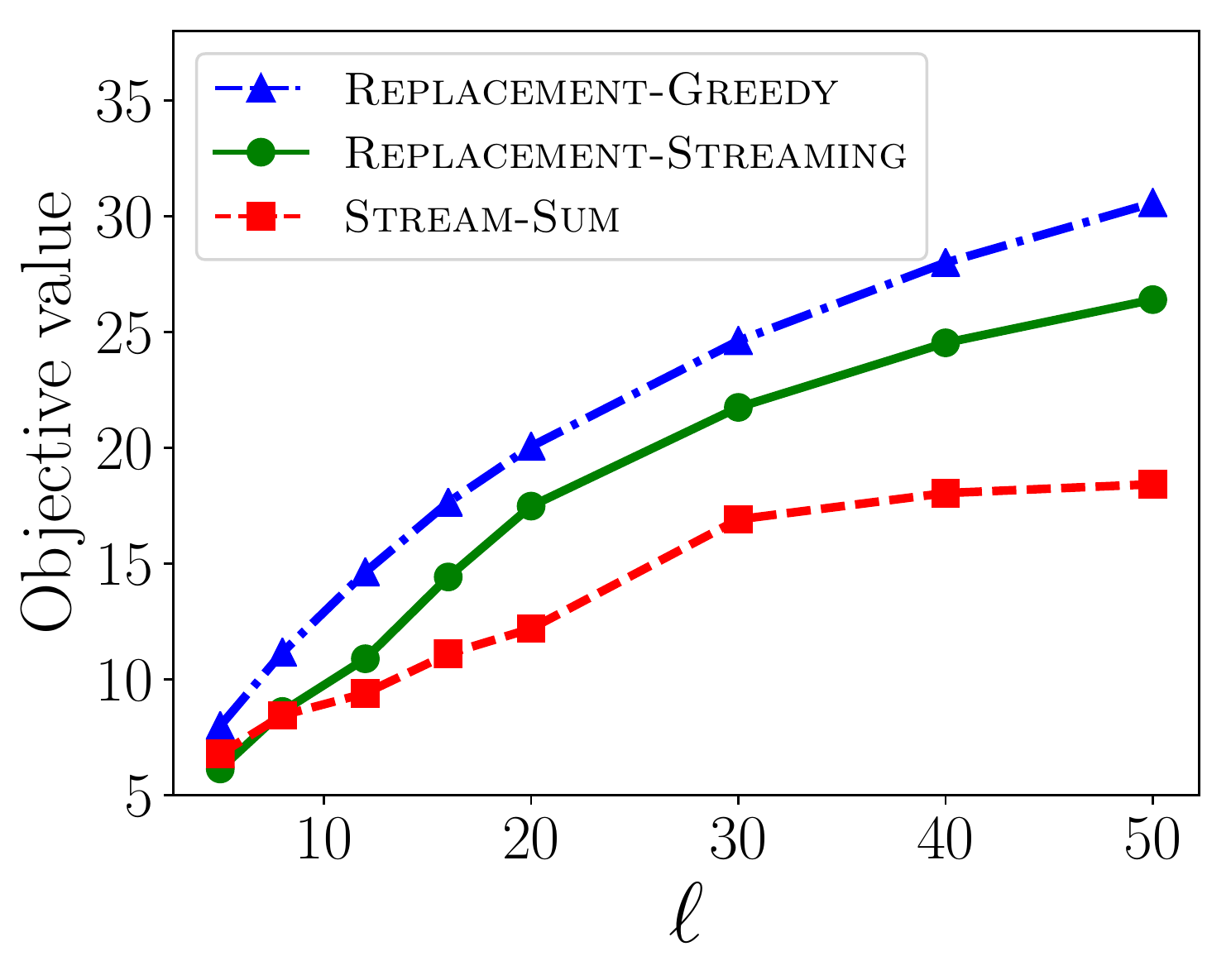} \label{fig:lv} }
	\hspace{0.025in}
	\subfloat[$\ell = 25$, $\epsilon = 0.5$]{\includegraphics[height=1.33in]{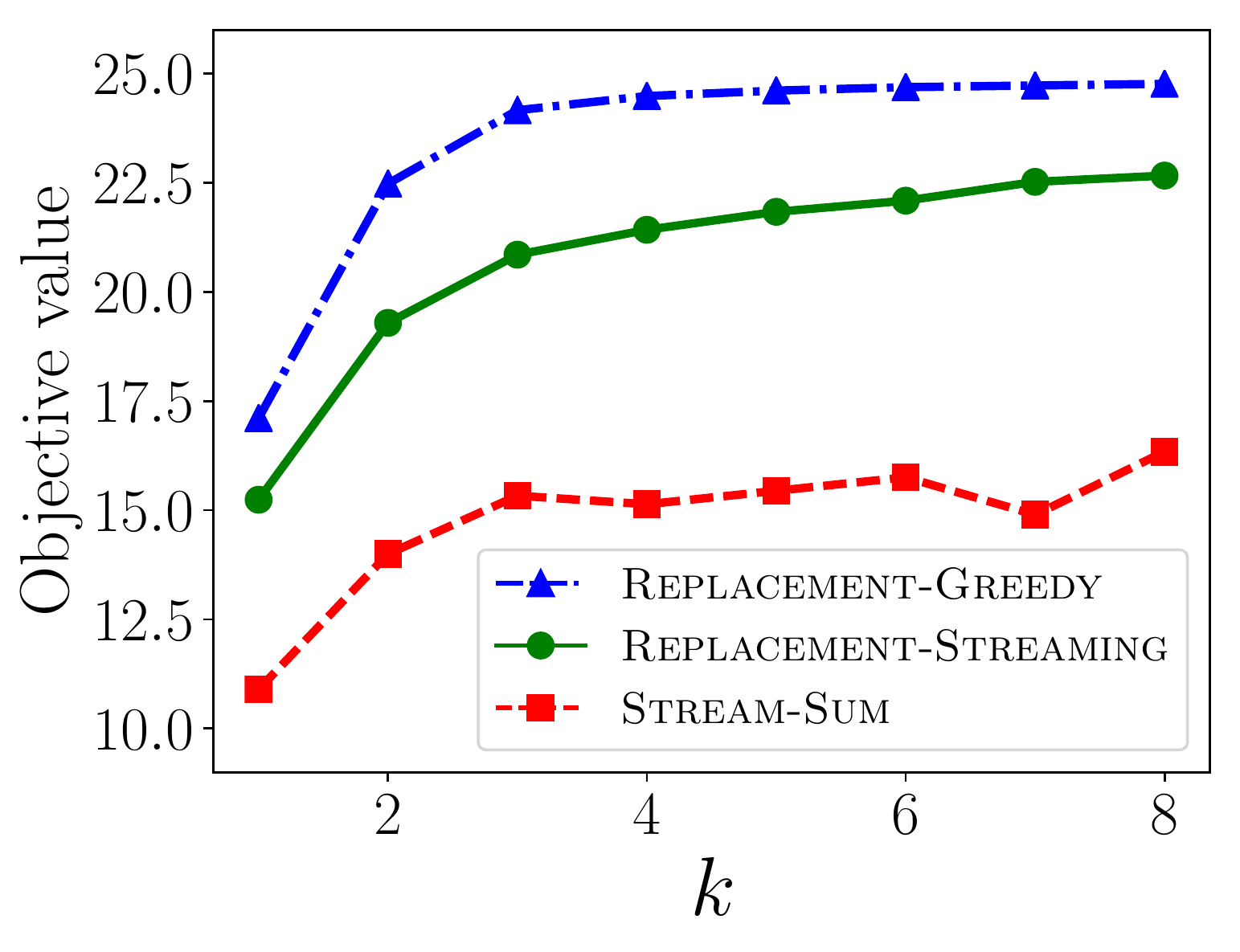}\label{fig:kv}}
	\hspace{0.025in}
	\subfloat[$\ell = 25$, $k = 5$]{\includegraphics[height=1.3in]{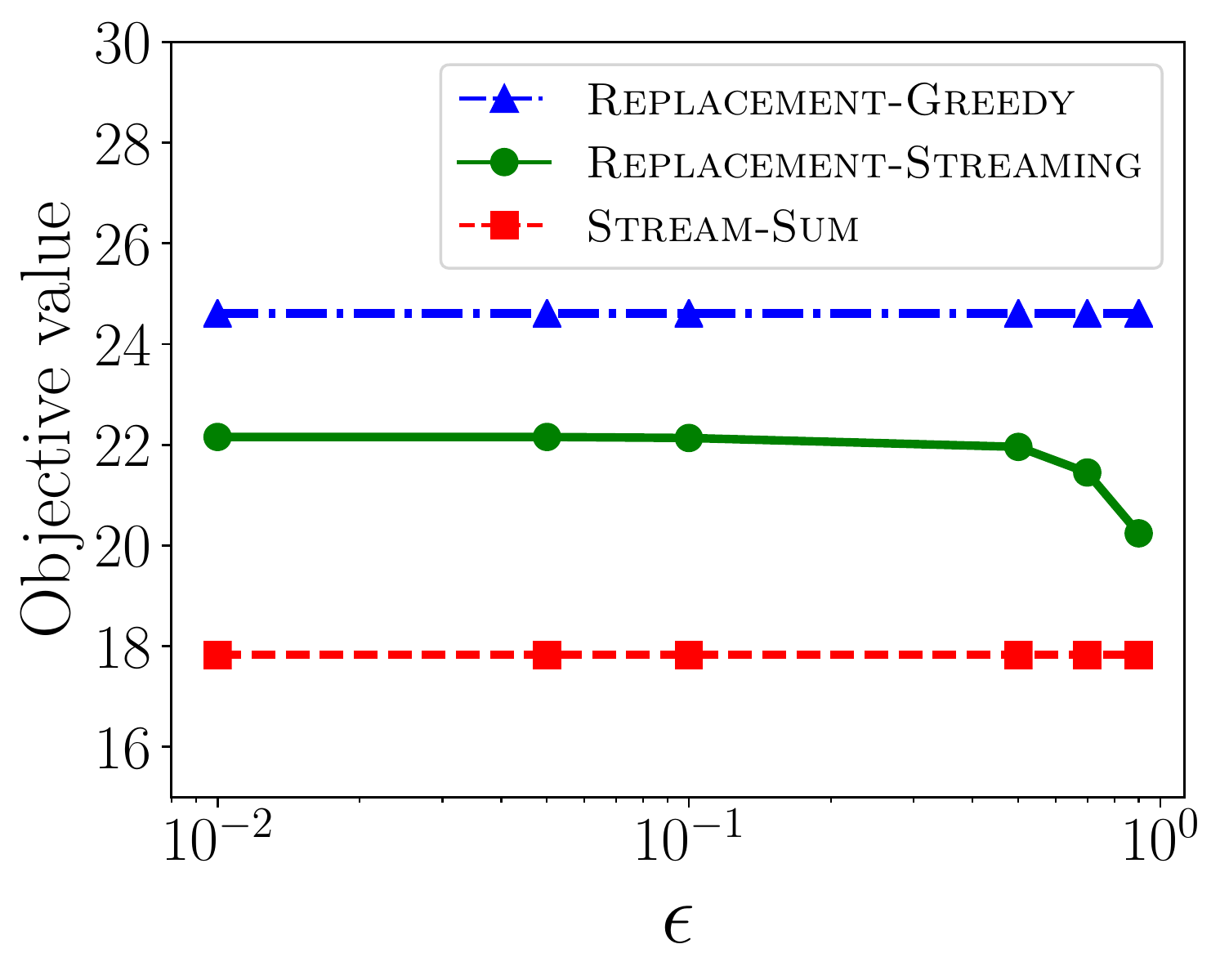}\label{fig:ev}}

	\subfloat[$k = 5$, $\epsilon = 0.5$]{\includegraphics[height=1.33in]{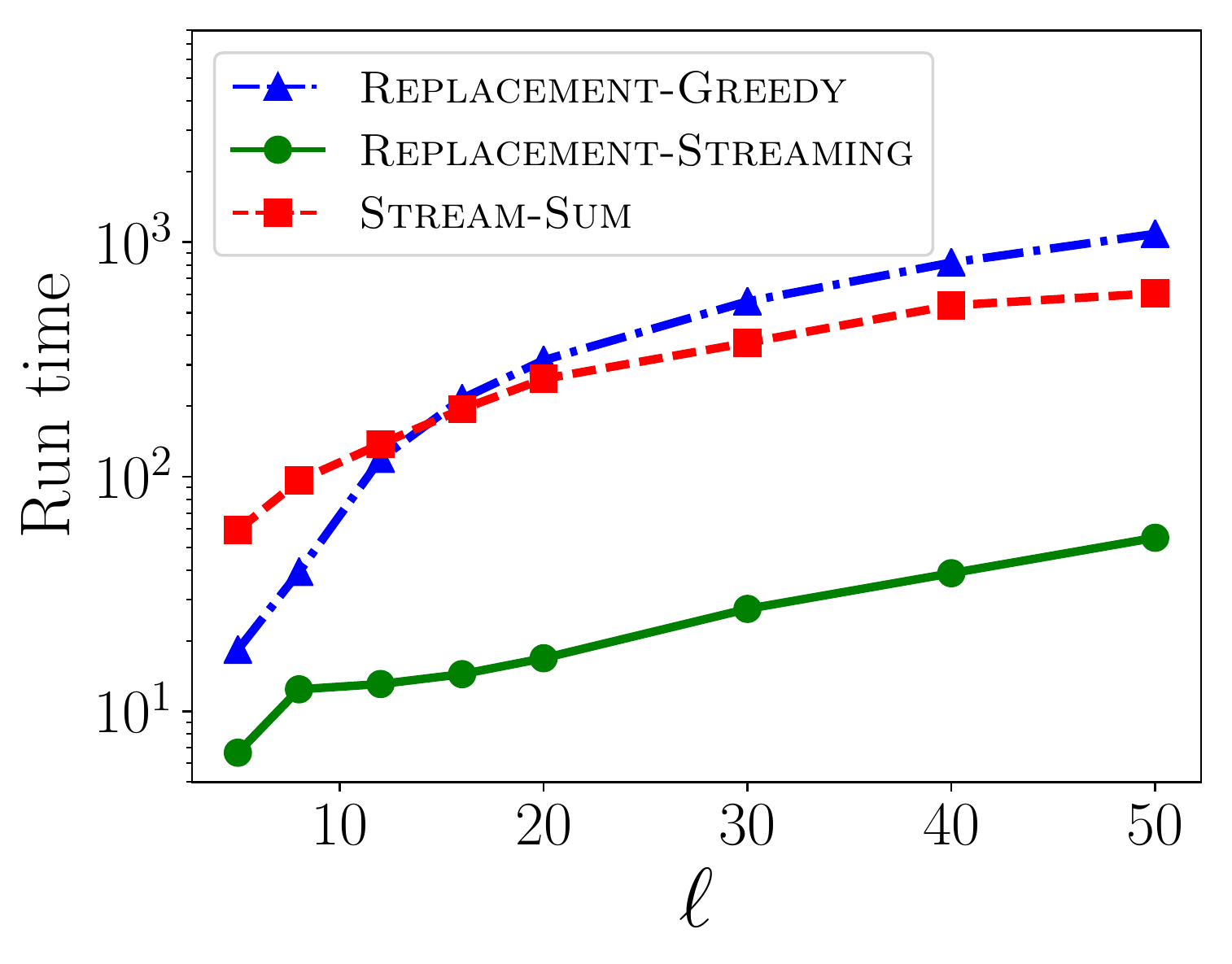}\label{fig:lt}}
	\hspace{0.025in}
	\subfloat[$\ell = 25$, $\epsilon = 0.5$]{\includegraphics[height=1.33in]{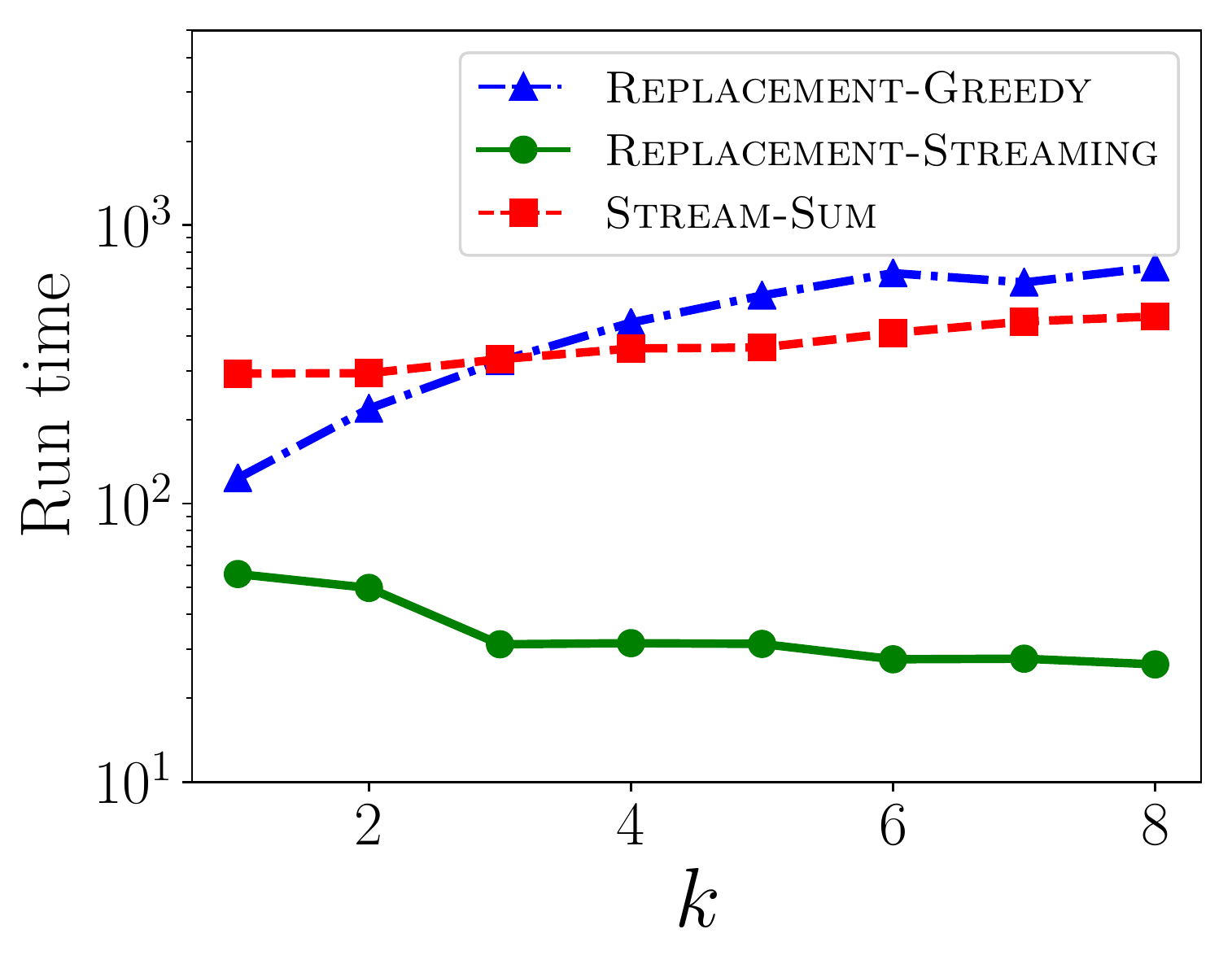}\label{fig:kt}}
	\hspace{0.025in}
	\subfloat[$\ell = 25$, $k = 5$]{\includegraphics[height=1.33in]{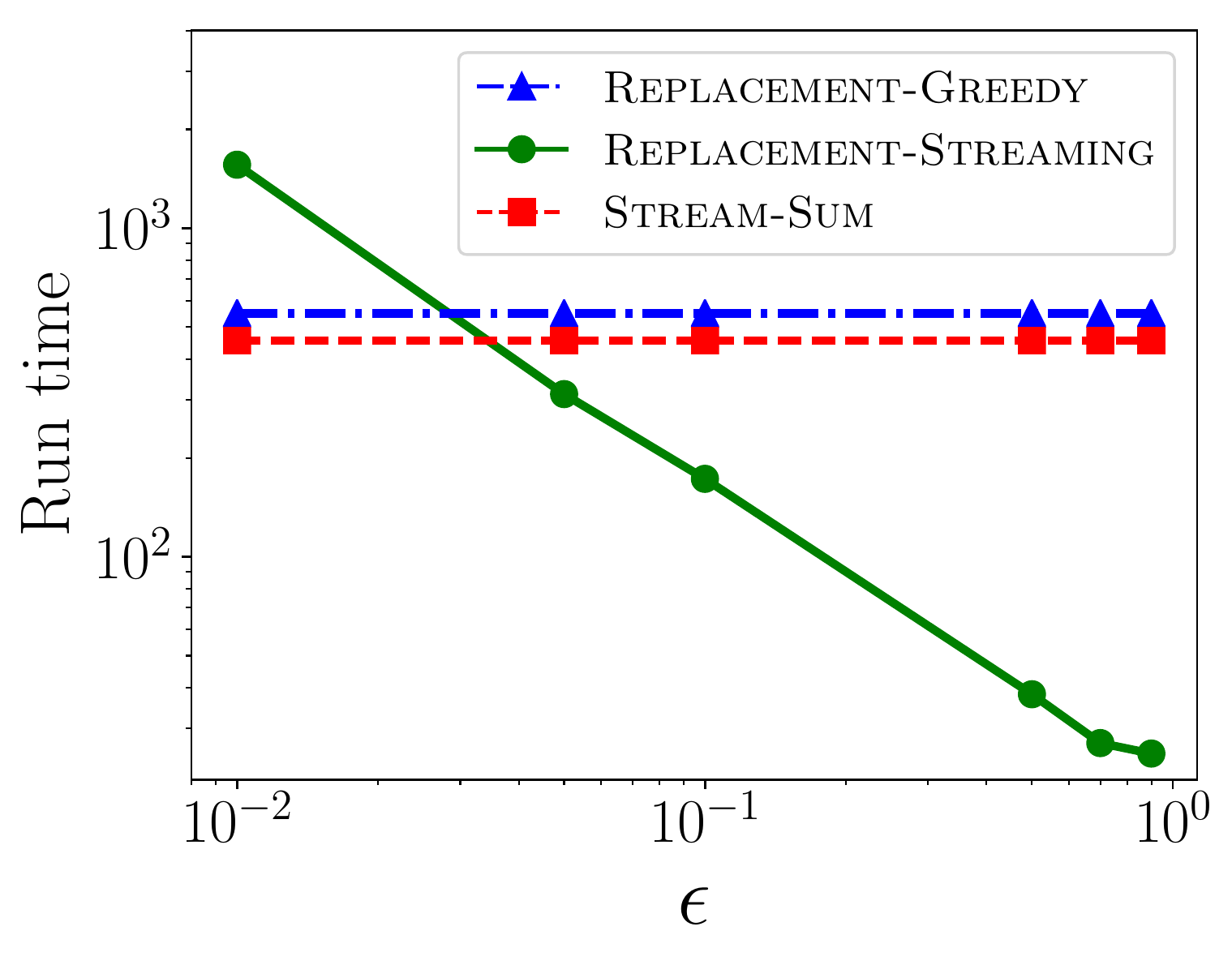}\label{fig:et}}
	\caption{The top row of graphs shows the objective values achieved by the various algorithms, while the bottom row shows the run times. In (a) and (d) we vary $l$, the maximum size of the subset $S$. In (b) and (e), we vary $k$, the maximum size of the set $T_i$ assigned to each function $f_i$. Lastly, in (c) and (f), we vary $\epsilon$, the parameter that controls the number of guesses we make for OPT. Our streaming algorithm (shown in green) compares favorably with the non-streaming version (shown in blue) in terms of objective value, and outperforms it significantly in terms of runtime.}
	\label{streamGraphs}
\end{figure*}

Figure \ref{fig:samples} shows some sample images selected by \textsc{Replacement-Greedy}  (top) and  \textsc{Replacement-Streaming} (bottom). Although the two summaries contain only one image that is exactly the same, we see that the different images still have a similar theme. For example, both images in the second column contain bikes and people; while in the third column, both images contain sheep.

 \begin{figure}[htb!]
	\centering
	\includegraphics[width=0.95\textwidth]{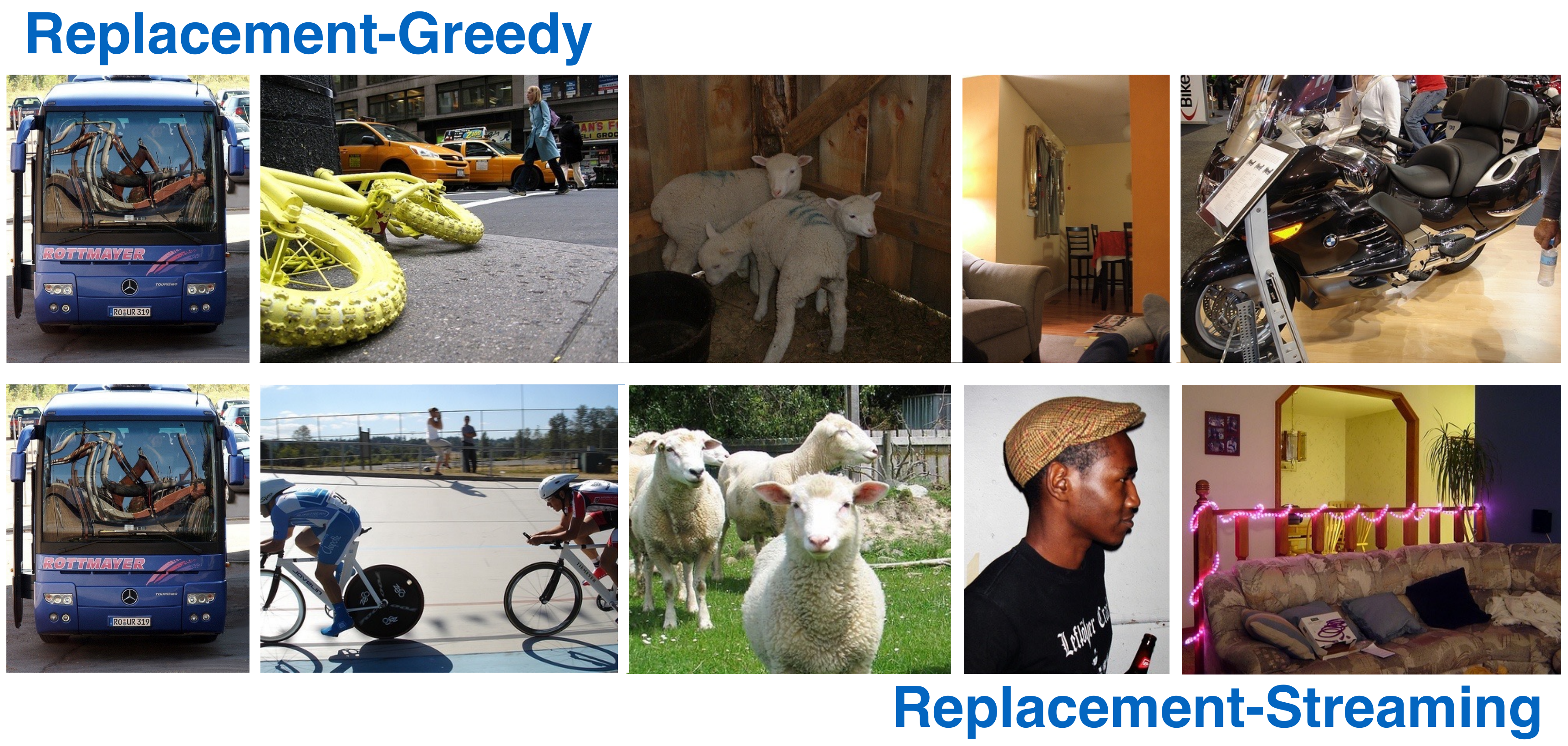}
	\caption{Representative images selected in the different settings. Although \AlgStream (bottom row) does not select all the same images as \AlgRG (top row), we see that the different images still have a similar theme. For example, both images in the second column contain bikes and people; while in the third column, both images contain sheep.}
	\label{fig:samples}
\end{figure}

\subsection{Distributed Ride-Share Optimization}

\begin{figure*}[htb!]
\centering
\subfloat[]{\includegraphics[height=1.4in]{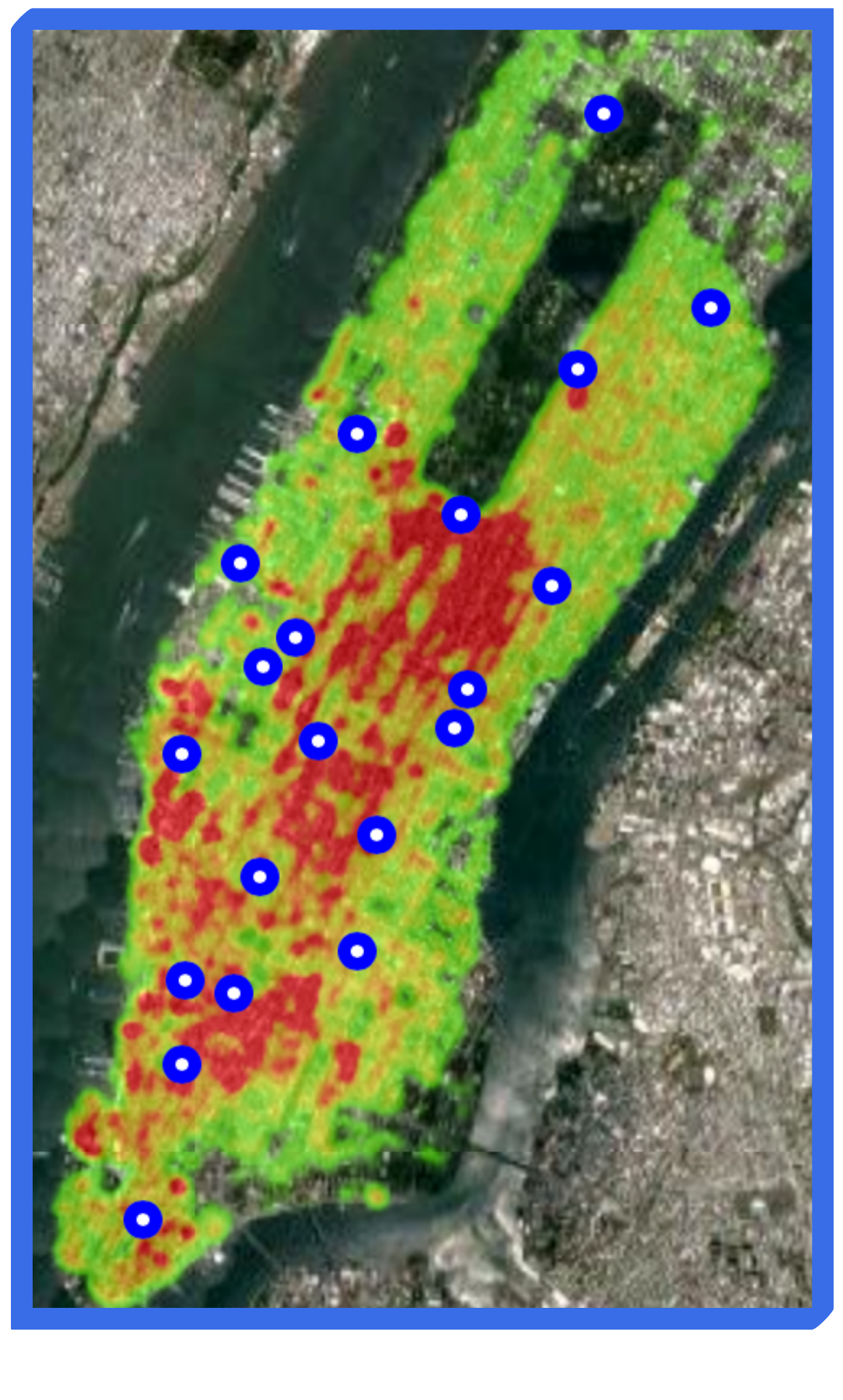} \label{fig:h1} }
\hspace{0.10in}
\subfloat[$\ell = 30$, $k = 3$]{\includegraphics[height=1.4in]{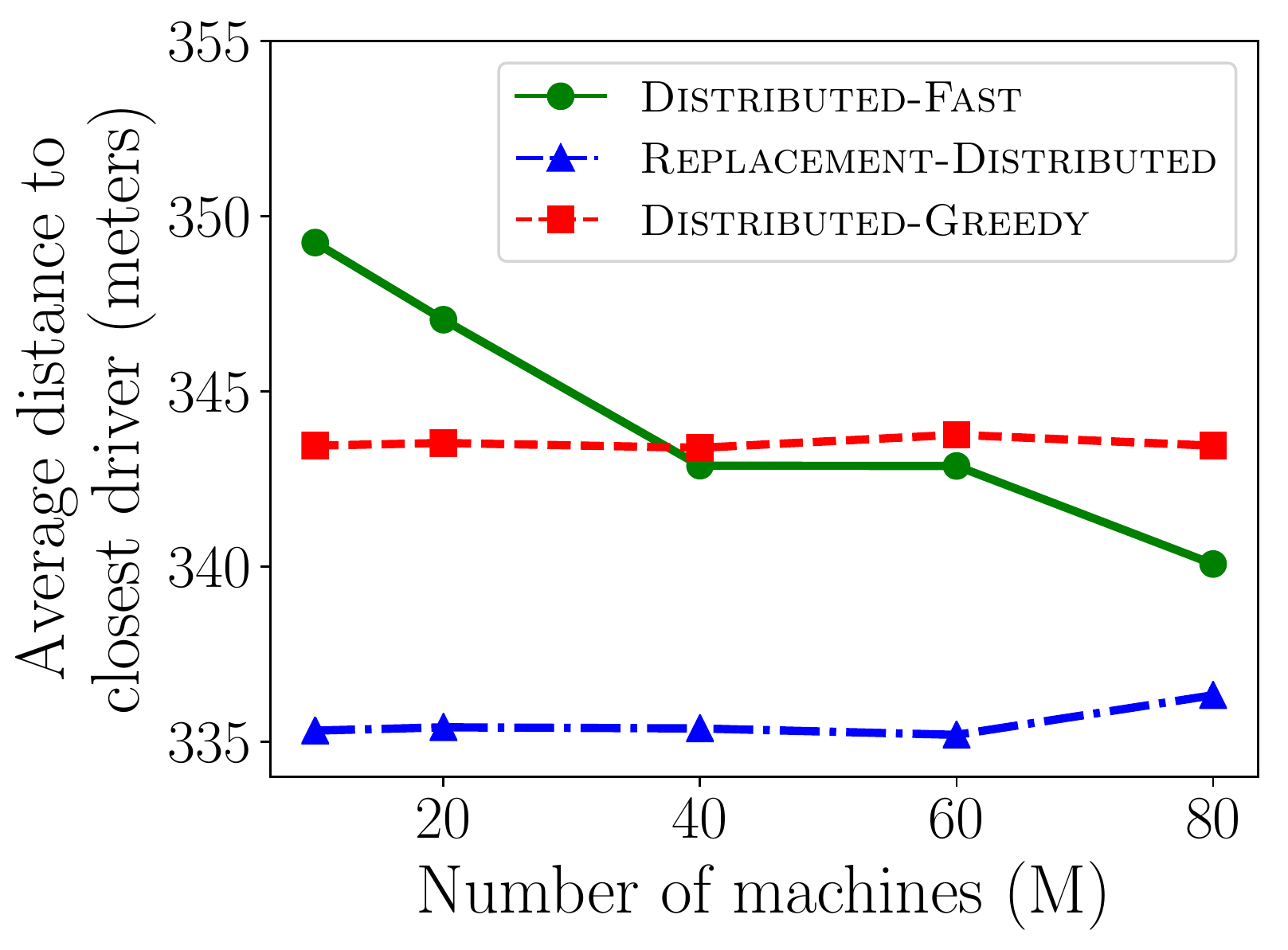}\label{fig:od}}
\hspace{0.10in}
\subfloat[$\ell = 30$, $k = 3$]{\includegraphics[height=1.4in]{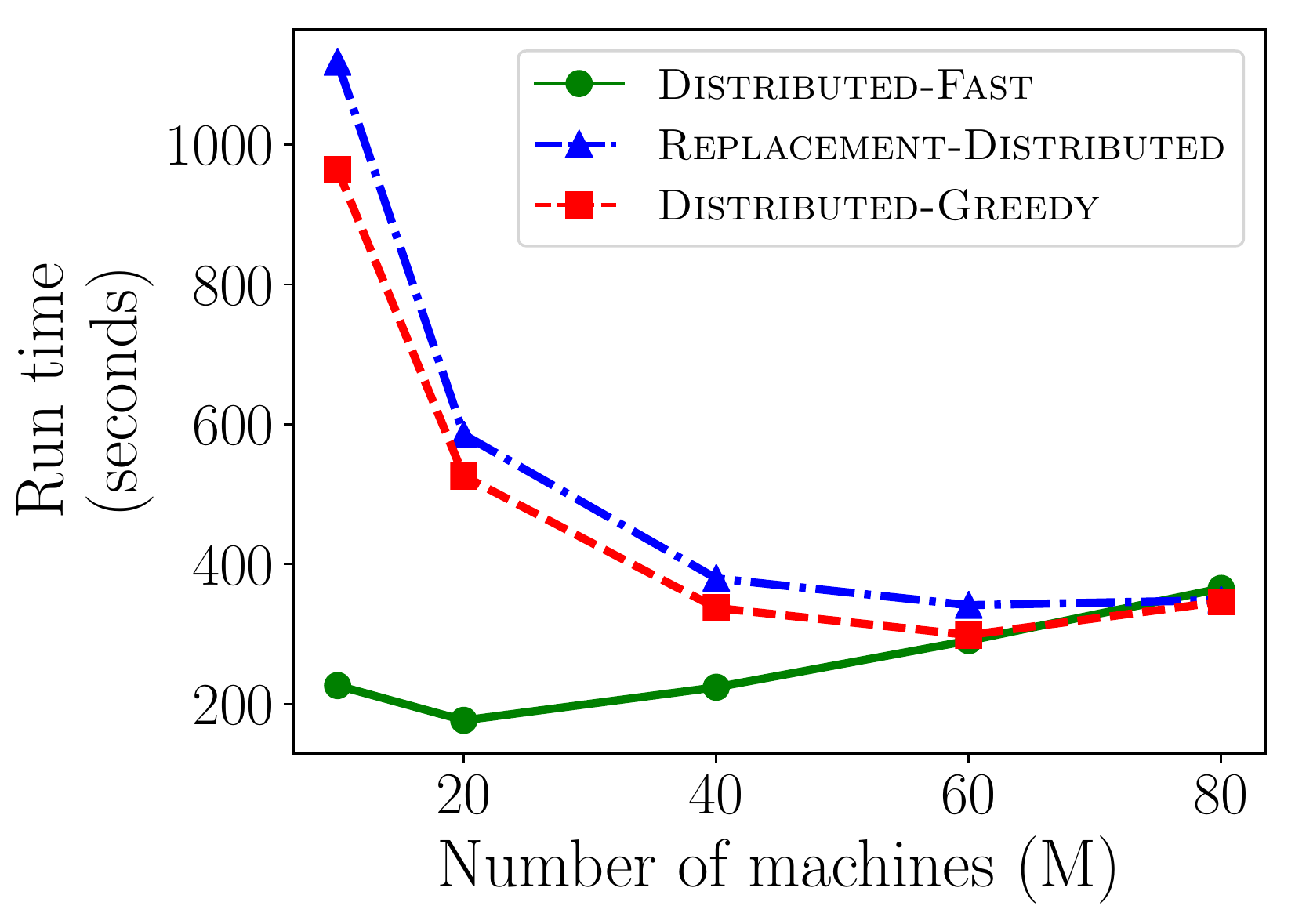}\label{fig:ot}}

\subfloat[]{\includegraphics[height=1.4in]{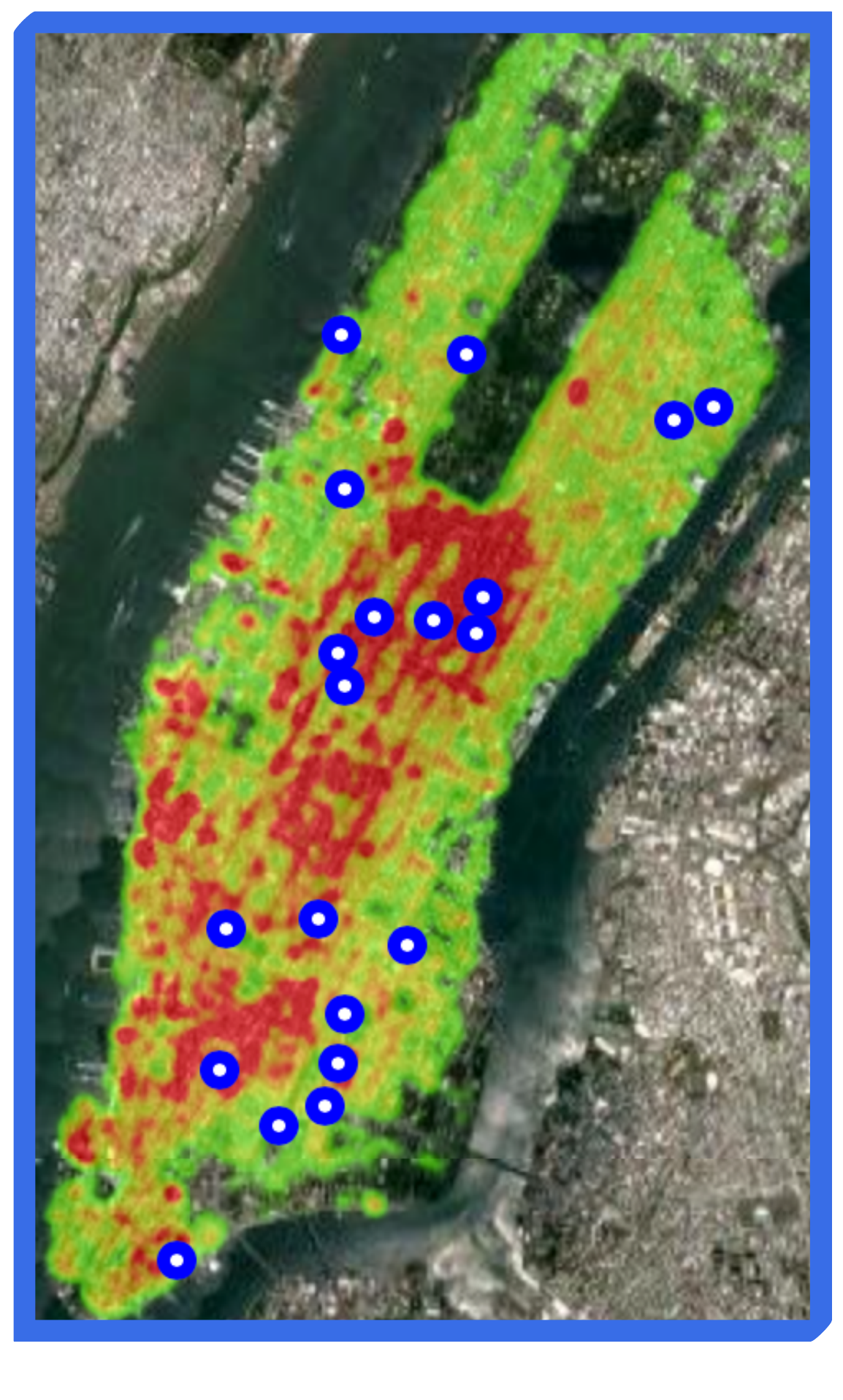}\label{fig:h2}}
\hspace{0.10in}
\subfloat[$k=3$]{\includegraphics[height=1.4in]{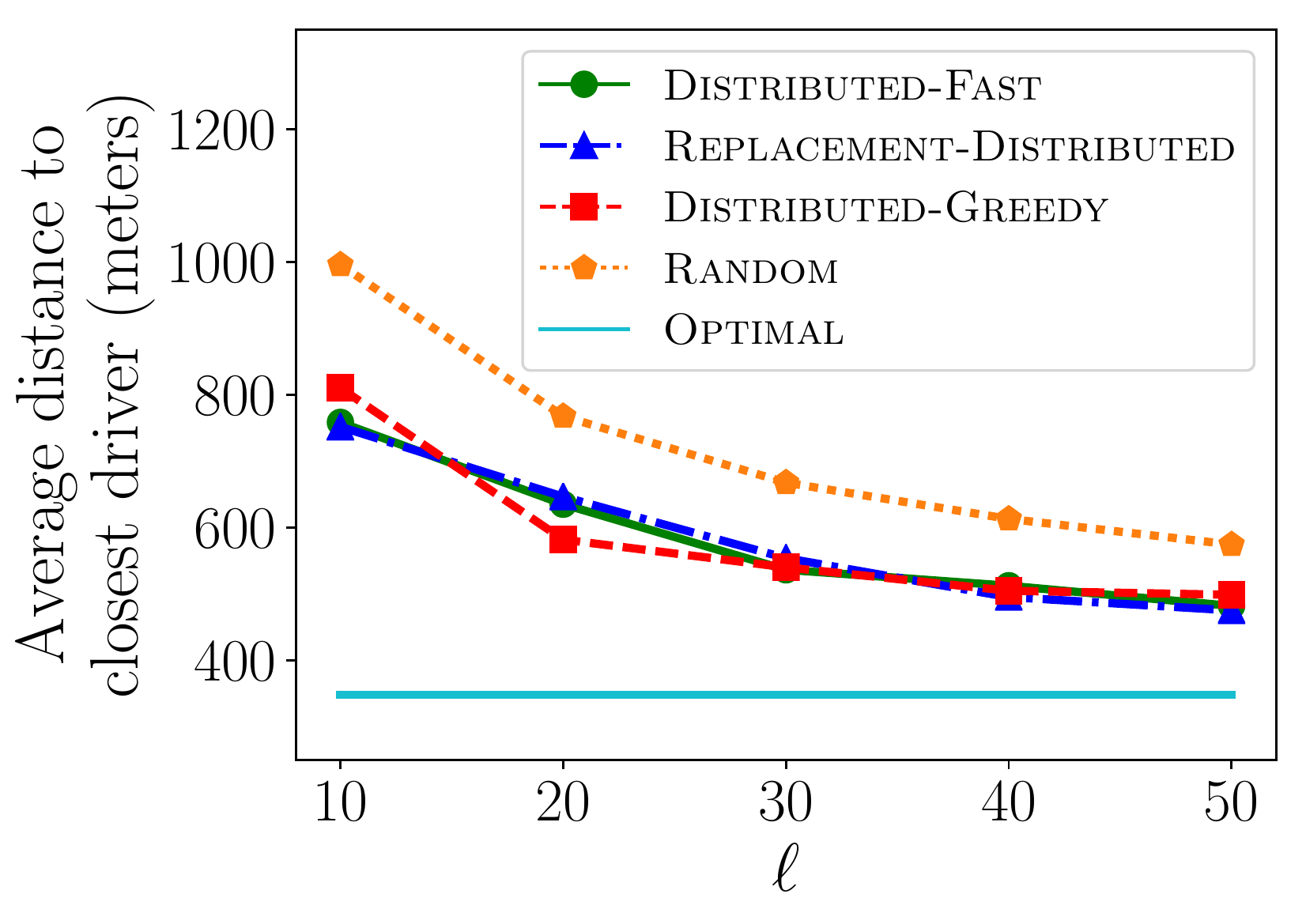}\label{fig:nd}}
\hspace{0.10in}
\subfloat[$k=3$]{\includegraphics[height=1.4in]{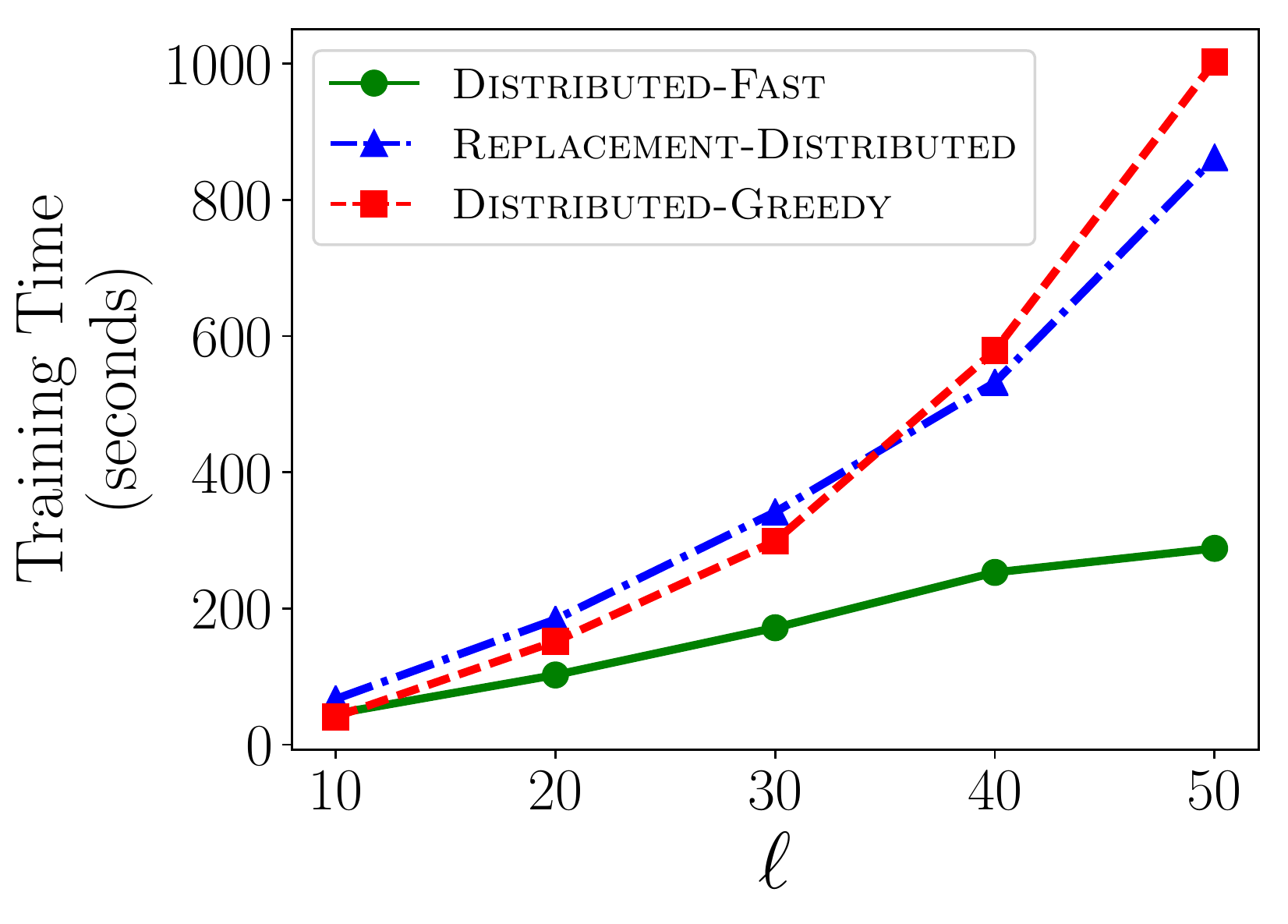}\label{fig:nt}}
\caption{(a) shows a heatmap of all pick-up locations, as well as the centers of the twenty random regions that define each function $f_i$. (b) and (c) show the effects of changing the number of machines we use to distribute the computation. (d) shows the centers of the twenty new regions (chosen from the same distribution) used for the evaluation in (e). (f) shows the training time for each summary used in (e).}
\label{fig:distGraphs}
\end{figure*}

In this application we want to use past Uber data to select optimal waiting locations for idle drivers. Towards this end, we analyze a dataset of 100,000 Uber pick-ups in Manhattan from September 2014 \citep{UberDataset}, where each entry in the dataset is given as a (latitude, longitude) coordinate pair. We model this problem as a classical facility location problem, which is known to be monotone submodular.

Given a set of potential waiting locations for drivers, we want to pick a subset of these locations so that the distance from each customer to his closest driver is minimized. In particular, given a customer location $a = (x_a,y_a)$, and a waiting driver location $b = (x_b, y_b)$, we define a ``convenience score" $c(a,b)$ as follows: $c(a,b) = 2 - \frac{2}{1 + e^{-200d(a,b)}}$,
where $\d(a,b) = |x_a - x_b| + |y_a - y_b|$ is the Manhattan distance between the two points.

Next, we need to introduce some functions we want to maximize. For this experiment, we can think about different functions corresponding to different (possibly overlapping) regions around Manhattan. The overlap means that there will still be some inherent connection between the functions, but they are still relatively distinct from each other. More specifically, we construct regions $R_1, \hdots, R_m$ by randomly picking $m$ points across Manhattan. Then, for each point $p_i$, we want to define the corresponding region $R_i$ by all the pick-ups that have occurred within one kilometer of $p_i$. However, to keep the problem computationally tractable, we instead randomly select only ten pick-up locations within that same radius. Figure \ref{fig:distGraphs}(a) shows the center points of the $m=20$ randomly selected regions, overlaid on top of a heat map of all the customer pick-up locations.
 
Given any set of driver waiting locations $T_i$, we define $f_i(T_i)$ as follows:
 $f_i(T_i) = \sum_{a \in R_i} \max_{b \in T_i} \hspace{0.05cm} c(a,b).$
For this application, we will use every customer pick-up location as a potential waiting location for a driver, meaning we have 100,000 elements in our ground set $\Omega$. This large number of elements, combined with the fact that each single function evaluation is computationally intensive, means running the regular \textsc{Replacement-Greedy} will be prohibitively expensive. Hence, we will use this setup to evaluate the two distributed algorithms we presented in Section \ref{section:distributed}. We will also compare our algorithms against a heuristic baseline that we call \textsc{Distributed-Greedy}. This baseline will first select $\ell$ elements using the greedy distributed framework introduced by \citet{mirzasoleiman2013distributed}, and then greedily optimize each $f_i$ over these $\ell$ elements. 

Each algorithm produces two outputs: a small subset $S$ of potential waiting locations (with size $\ell=30$), as well as a solution $T_i$ (of size $k=3$) for each function $f_i$. In other words, each algorithm will reduce the number of potential waiting locations from 100,000 to 30, and then choose 3 different waiting locations for drivers in each region. 

In Figure \ref{fig:distGraphs}(b), we graph the average distance from each customer to his closest driver, which we will refer to as the cost. One interesting observation is that while the cost of \textsc{Distributed-Fast} decreases with the number of machines, the costs of the other two algorithms stay relatively constant, with \textsc{Replacement-Distributed} marginally outperforming \textsc{Distributed-Greedy}. In Figure \ref{fig:distGraphs}(c), we graph the run time of each algorithm. We see that the algorithms achieve their optimal speeds at different values of $M$, verifying the theory at the end of \cref{section:distributed}. Overall, we see that while all three algorithms have very comparable costs, \textsc{Distributed-Fast} is significantly faster than the others.

While in the previous application we only looked at the objective value for the given functions $f_1, \hdots, f_m$, in this experiment we also evaluate the utility of our summary on new functions drawn from the same distribution. That is, using the regions shown in Figure \ref{fig:distGraphs}(a), each algorithm will select a subset $S$ of potential waiting locations. Using only these reduced subsets, we then greedily select $k$ waiting locations for each of the twenty new regions shown in \ref{fig:distGraphs}(d). 

In Figure \ref{fig:distGraphs}(e), we see that the summaries from all three algorithms achieve a similar cost, which is significantly better than \textsc{random}. In this scenario, \textsc{random} is defined as the cost achieved when optimizing over a random size $\ell$ subset and \textsc{optimal} is defined as the cost that is achieved when optimizing the functions over the entire ground set rather than a reduced subset. In Figure  \ref{fig:distGraphs}(f), we confirm that \textsc{Distributed-Fast} is indeed the fastest algorithm for constructing each summary. Note that \ref{fig:distGraphs}(f) is demonstrating how long each algorithm takes to construct a size $\ell$ summary, not how long it is taking to optimize over this summary.
\section{Conclusion} \label{section:conc}
\vspace{-5pt}
To satisfy the need for scalable data summarization algorithms, this paper focused on the two-stage submodular maximization framework and provided the first streaming and distributed solutions to this problem. In addition to constant factor theoretical guarantees, we demonstrated the effectiveness of our algorithms on real world applications in image summarization and ride-share optimization.

\newpage

\section*{Acknowledgements}
Amin Karbasi was supported by a DARPA Young Faculty Award (D16AP00046) and a AFOSR Young Investigator Award (FA9550-18-1-0160).
Ehsan Kazemi was supported by the Swiss National Science Foundation (Early Postdoc.Mobility) under grant
number 168574.

\bibliography{references}
\bibliographystyle{icml2018}

{
	\linespread{1.0}
\newpage
\onecolumn
\appendix

\input{streaming_proof}
\input{distributed_proof}
\section{\AlgRG} \label{section-replace-greedy}
In this section, in order to make the current manuscript self-contained, we describe the \AlgRG from \citep{stan17}. We use this greedy algorithm in \cref{section:distributed} as one of the building blocks of our distributed algorithms.

We first define few necessary notations.
The additive value of an element $x$ to a set $A$ from a function $f_i$ is defined as follows:
\[ \Lambda_i(x, A) = \left\{
\begin{array}{ll}
 f_i(x|A)& \mbox{if } |A| < k , \\
\max\{0, \Delta_i(x, A)\}& \mbox{o.w.},
\end{array}
\right.\]
where $\Delta_i(x,A)$ is defined in \cref{eq:delta}. We also define:
\[ \RepG_i(x, A) = \left\{
\begin{array}{ll}
\emptyset & \mbox{if } |A| < k , \\
\emptyset & \Delta_i(x,A) < 0, \\
\Rep_i(x, A)	& \mbox{o.w.},
\end{array}
\right.\]
where $\Rep_i(x, A)$ is defined in \cref{eq:rep}. 
Indeed, $ \RepG_i(x, A) $ represents the element from set $A$ which should be replaced with $x$ in order to get the maximum (positive) additive gain, where the cardinality constraint $k$ is satisfied. 
\AlgRG starts with empty sets $S$ and $\{T_i\}$. In $\ell$ rounds, it greedily adds elements with the maximum additive gains $\sum_{i=1} ^m \Lambda_i(x, T_i)$ to set $S$. If the gain of adding these elements (or exchanging with one element of $T_i$ where there exists $k$ elements in $T_i$) is non-negative, we also update sets $T_i$.
\AlgRG is outlined in \cref{alg:replacement_greedy}.
\begin{algorithm}[H]
	\caption{\AlgRG}
	\label{alg:replacement_greedy}
	\begin{algorithmic}[1]
	\STATE $S \gets \emptyset$ and $T_i \gets \emptyset$ for all $1 \leq i \leq m$
	\FOR{$1 \leq j \leq \ell$}
	\STATE $x^* \geq \argmax_{x \in \Omega} \sum_{i=1}^{m} \Lambda_i(x, T_i)$
	\STATE $S \gets S + x^*$
	\FOR{$1 \leq i \leq m$}
	\IF{$\Lambda_i(x^*, T_i) > 0$}
	\STATE $T_i \gets T_i  + x^* - \RepG_i(x^*, T_i)$
	\ENDIF
	\ENDFOR
	\ENDFOR
		\STATE {\bfseries Return:} $S$ and $\{T_i\}$ 
	\end{algorithmic}
\end{algorithm} 
\section{VOC2012 Feature Explanation} \label{appendix:additional}

To further clarify the VOC2012 dataset used in \cref{subsection:imageStream}, we explicitly list the twenty classes that appear in the dataset. We also give an example of an image from the dataset and its corresponding characteristic vector.

\begin{figure}[h]
\centering
\subfloat[]{\includegraphics[height=2.1in]{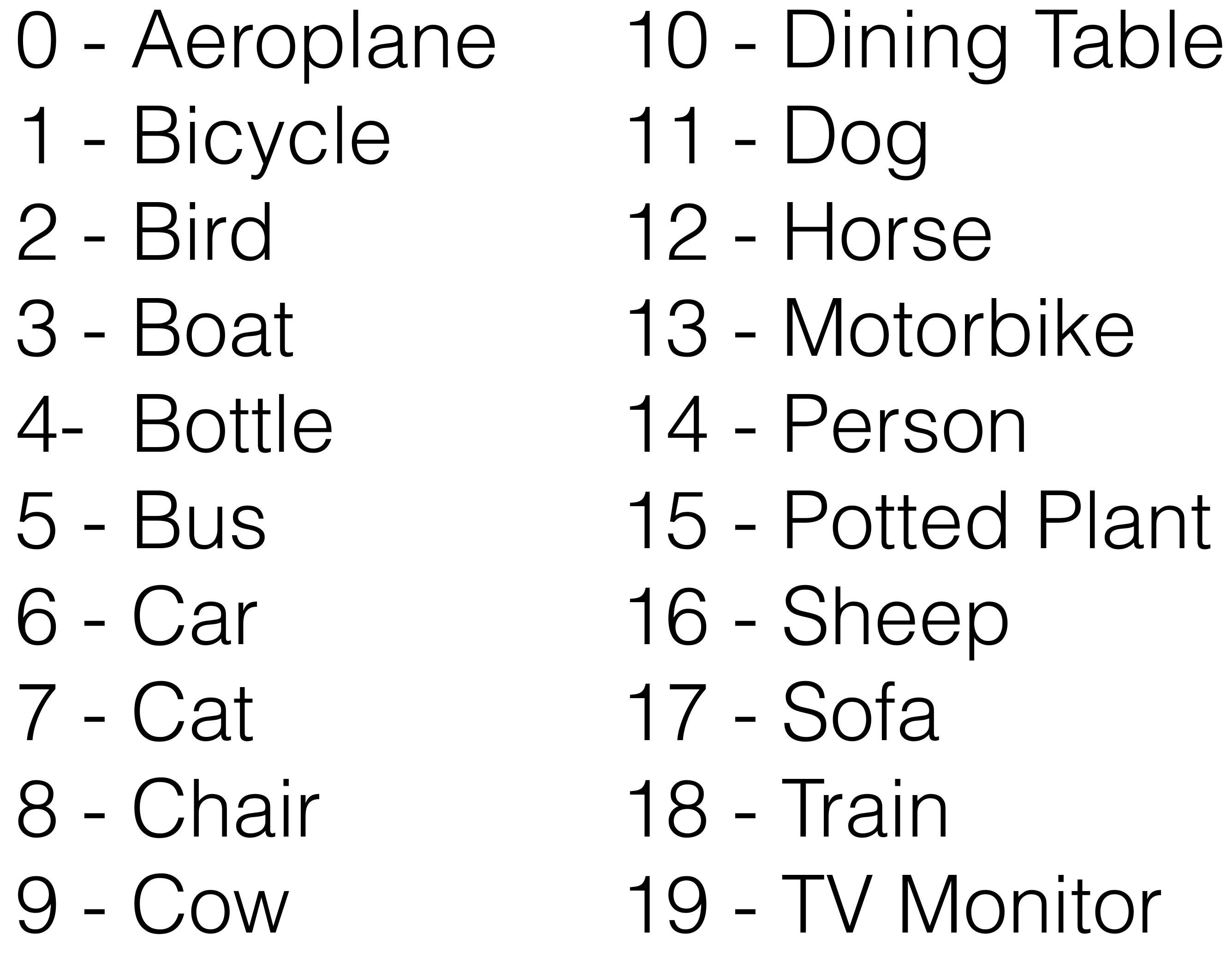}\label{fig:classes}}
\hspace{0.1in}
\subfloat[]{\includegraphics[height=2.1in]{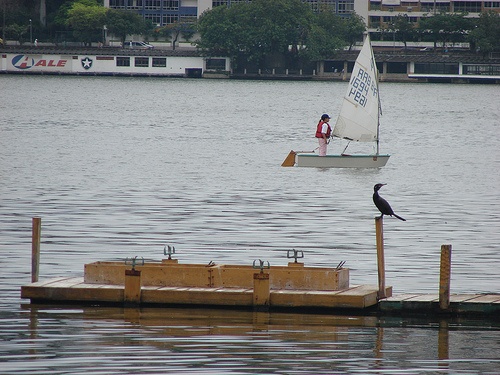} \label{fig:img1} }
\caption{(a) shows the twenty classes that appear in the \textit{VOC2012} dataset. The number adjacent to each class represents the index of that class in the characteristic vector associated with each image. For example, the image shown in (b) contains one boat, one bird, and one person. Therefore, the characteristic vector for this image is [0, 0, 1, 1, 0, 0, 0, 0, 0, 0, 0, 0, 0, 0, 1, 0, 0, 0, 0, 0]. This also means that the image in (b) appears in the sets $\Omega_2$,  $\Omega_4$, and  $\Omega_{14}$.}
\label{classes}
\end{figure}
}
\end{document}